\def\1{\bm{1}}
\def\va{{\bm{a}}}
\def\vr{{\bm{r}}}
\def\vt{{\bm{t}}}
\def\vx{{\bm{x}}}
\def\vz{{\bm{z}}}
\DeclareMathAlphabet{\mathsfit}{\encodingdefault}{\sfdefault}{m}{sl}
\SetMathAlphabet{\mathsfit}{bold}{\encodingdefault}{\sfdefault}{bx}{n}
\DeclareMathOperator*{\argmax}{arg\,max}
\DeclareMathOperator{\sign}{sign}
\newcommand{\prob}[0]{\mathbb{P}} %
\DeclareMathOperator{\bce}{cross\_entropy}
\newcolumntype{P}[1]{>{\centering\arraybackslash}p{#1}}
\newcommand{\eg}{e.g., }
\newcommand{\ie}{i.e., }
\newcommand{\etc}{etc.}
\newcommand{\wrt}{{w.r.t.\ }}
\newcommand{\producer}{P}
\newcommand{\consumer}{C}
\newcommand{\lassi}{R}
\newcommand{\model}{M}
\newcommand{\glowenc}{E}
\newcommand{\glowdec}{D}
\newcommand{\smoothconsumer}{\widehat{\consumer}}
\newcommand{\smoothc}{\widehat{\consumer}}
\newcommand{\smoothlassi}{\widehat{\lassi}}
\newcommand{\inputspace}{\mathbb{R}^n}
\newcommand{\representationspace}{\mathbb{R}^k}
\newcommand{\generativespace}{\mathbb{R}^q}
\newcommand{\outputspace}{\mathcal{Y}}
\newcommand{\inputsim}{\phi}
\newcommand{\inputpoint}{\vx}
\newcommand{\representationpoint}{\vr}
\newcommand{\attrvec}{\va}
\newcommand{\generativepoint}{\vz}
\newcommand{\glowpoint}{\generativepoint_G}
\newcommand{\lcifrpoint}{\representationpoint_R}
\newcommand{\CScenter}{\representationpoint_{cs}}
\newcommand{\CSradius}{d_{cs}}
\newcommand{\CSsigma}{\sigma_{cs}}
\newcommand{\RSradius}{d_{rs}}
\newcommand{\RSsigma}{\sigma_{rs}}
\newcommand{\CSprob}{\alpha_{cs}}
\newcommand{\RSprob}{\alpha_{rs}}
\newcommand{\simset}{S\left(\inputpoint\right)}
\newcommand{\simsetl}{\simset}
\newcommand{\simseti}{S^{\mathrm{in}}\left(\inputpoint\right)}
\newcommand{\crefrangeconjunction}{--}
\crefname{listing}{Lst.}{listings}
\crefname{algorithm}{Alg.}{algorithms}
\crefname{theorem}{Thm.}{theorems}
\Crefname{theorem}{Theorem}{theorems}
\crefname{line}{Lin.}{Lin.}
\crefname{appendix}{App.}{App.}
\newcommand{\app}[1]{%
	\ifbool{includeappendix}{\cref{#1}}{the appendix}%
}
\newcommand{\App}[1]{%
	\ifbool{includeappendix}{\cref{#1}}{The appendix}%
}
\crefname{section}{Sec.}{Secs.}
\Crefname{section}{Section}{Sections}
\Crefname{table}{Table}{Tables}
\crefname{table}{Tab.}{Tabs.}
\begin{document}
\pagestyle{headings}
\mainmatter

\title{Latent Space Smoothing for\\Individually Fair Representations} %

\titlerunning{Latent Space Smoothing for Individually Fair Representations}
\author{Momchil Peychev$^1$\orcidlink{0000-0003-0927-6356} \and
Anian Ruoss$^2$\thanks{Work partially done while the author was at ETH Zurich.}\orcidlink{0000-0002-8616-2558} \and
Mislav Balunovi\'{c}$^1$\orcidlink{0000-0002-7024-7688} \and\\
Maximilian Baader$^1$\orcidlink{0000-0002-9271-6422} \and
Martin Vechev$^1$\orcidlink{0000-0002-0054-9568}
}
\authorrunning{M. Peychev et al.}
\institute{$^1$Department of Computer Science, ETH Zurich \quad $^2$DeepMind, London\\
\email{\{momchil.peychev,mislav.balunovic,mbaader,martin.vechev\}@inf.ethz.ch
anianr@deepmind.com}
}
\maketitle

\begin{abstract}
  Fair representation learning transforms user data into a representation that
ensures fairness and utility regardless of the downstream application.
However, learning individually fair representations, \ie guaranteeing that
similar individuals are treated similarly, remains challenging in
high-dimensional settings such as computer vision.
In this work, we introduce LASSI, the first representation learning method for
certifying individual fairness of high-dimensional data.
Our key insight is to leverage recent advances in generative modeling to capture
the set of similar individuals in the generative latent space.
This enables us to learn individually fair representations that map similar
individuals close together by using adversarial training to minimize the
distance between their representations.
Finally, we employ randomized smoothing to provably map similar individuals
close together, in turn ensuring that local robustness verification of the
downstream application results in end-to-end fairness certification.
Our experimental evaluation on challenging real-world image data demonstrates
that our method increases certified individual fairness by up to 90\%
without significantly affecting task utility.

  \keywords{fair representation learning, individual fairness, smoothing}
\end{abstract}

\section{Introduction}
\label{sec:introduction}

Deep learning models are increasingly deployed in critical domains, such as face
detection~\cite{sun2018face}, credit scoring~\cite{khandani2010consumer}, or
crime risk assessment~\cite{brennan2009evaluating}, where decisions of the model
can have wide-ranging impacts on society.
Unfortunately, the models and datasets employed in these settings are
biased~\cite{buolamwini2018gender,klare2012face}, which raises concerns against
their usage for such tasks and causes regulators to hold organizations
accountable for the discriminatory effects of their
models~\cite{eu2019ethics,eu2021proposal,ftc2020using,ftc2021aiming,hrc2021right}.

In this regard, fair representation learning~\cite{zemel2013learning} is a
promising bias mitigation approach that transforms data to prevent
discrimination regardless of the concrete downstream application while
simultaneously maintaining high task utility.
The approach is highly modular~\cite{mcnamara2019costs}: the \emph{data
regulator} defines the fairness notion, the \emph{data producer} learns a fair
representation that encodes the data, and the \emph{data consumers} employ the
transformed data in downstream tasks.
Recent work successfully augmented fair representation learning with
guarantees~\cite{gitiaux2021learning,ruoss2020learning}, but its application to
high-dimensional data, such as images, remains challenging.

\medskip

\noindent\textbf{Key challenge: scaling to high-dimensional data and
real-world models} {}
The two central challenges of \emph{individually} fair representation learning,
which requires similar individuals to be treated similarly, are: (i) designing
a suitable input similarity
metric~\cite{yurochkin2020training,zemel2013learning} and (ii) enforcing that
similar individuals are \emph{provably} treated similarly according to that
metric.
For low-dimensional tabular data, prior work has typically measured input
similarity in terms of the input features (age, income, \etc), using, \eg logical
constraints~\cite{ruoss2020learning} or weighted
$\ell_p$-metrics~\cite{yeom2020individual}.
However, characterizing the similarity of high-dimensional data, such as images,
at the input-level, \eg by comparing pixels, is infeasible.
Moreover, proving that all points in the infinite set of similar individuals
obtain the same classification requires propagating this set through the model.
Unfortunately, for high-dimensional applications this is unattainable for prior
work using (mixed-integer) linear programming
solvers~\cite{ehlers2017formal,tjeng2019evaluating}, which only scale to small
networks.

\medskip

\begin{figure}[t]
	\centering
	\resizebox{\linewidth}{!}{ \input{figures/overview_figure} }
	\caption{
		Overview of our framework LASSI.
		The left part shows the data producer who captures the set of individuals
		similar to $\vx$ by interpolating along the attribute vector
		$\attrvec_{\text{pale}}$.
		The data producer then uses adversarial training
		and center smoothing to compute a
		representation that provably maps all similar points into the
		$\ell_2$-ball of radius $\CSradius$ around $\CScenter$.
		The right part shows the data consumer who can certify individual
		fairness, \ie prove that all similar individuals receive the same
		classification outcome, of the end-to-end model by checking whether the
		certified radius obtained via randomized smoothing exceeds $\CSradius$.
	}
	\label{fig:overview}
\end{figure}

\noindent\textbf{This work} {}
In this work, we introduce latent space smoothing for individually fair
representations (LASSI), a method that addresses both of the above challenges.
Our approach leverages two recent advances: the emergence of powerful generative
models~\cite{kingma2018glow}, which enable the definition of image similarity
for individual fairness, and the scalable certification of deep
models~\cite{cohen2019certified}, which allows proving individual fairness.
A high-level overview of our approach is shown in \cref{fig:overview}.
Concretely, we use generative modeling~\cite{kingma2018glow} to enable data
regulators to define input similarity by varying a continuous attribute of the
image, such as pale skin in \cref{fig:overview}.
To enforce that similar individuals are provably treated similarly, we further
base our approach on smoothing: (i) the data producer uses center
smoothing~\cite{kumar2021center} to learn a representation that provably maps
similar individuals close together, and (ii) the data consumer certifies local
$\ell_2$-robustness using randomized smoothing~\cite{cohen2019certified},
thereby proving individual fairness of the end-to-end model.
Therefore, our approach enables data regulators to impose fairness notions of
the form: \emph{\say{For a given person, all people differing only in skin tone
should receive the same classification}} and allows data producers and
consumers to independently learn a representation and classification models that
provably enforce this notion.

To measure input similarity, the data producer leverages the ability of a
bijective generative model to interpolate along the direction of an attribute
vector in the latent space, which is impractical in the pixel space.
As a result, the set of similar individuals can be defined by a line segment in
the latent space (center part of the data producer in \cref{fig:overview}),
corresponding to an elaborate curve in the input space (left part of the data
producer in \cref{fig:overview}), which cannot be concisely captured by, \eg an
$\ell_p$-ball.
Thus, the data producer learns a representation $\lassi$ that maps all points of
the latent line segment close together in the representation space by using
adversarial training to minimize the distance between similar individuals.
However, as adversarial training cannot provide guarantees on this maximum
distance, the data producer uses center smoothing~\cite{kumar2021center} to
adjust the representation such that its \emph{smoothed} version $\smoothlassi$
provably maps all similar points into an $\ell_2$-ball of radius $\CSradius$
around a center $\CScenter$ with high probability (right part of the data
producer in \cref{fig:overview}).
Finally, the data consumer only needs to prove that the certified radius
(violet in the data consumer part of \cref{fig:overview}) of its \emph{smoothed}
classifier $\smoothconsumer$ around $\CScenter$ is larger than $\CSradius$ to
obtain an individual fairness certificate for the end-to-end model
$\model := \smoothconsumer \circ \smoothlassi \circ \glowenc$.

Our experimental evaluation on several image classification tasks shows that
training with LASSI significantly increases the number of individuals for which
we can certify individual fairness, with respect to multiple different
sensitive attributes, as well as their combinations. Overall, we certify up to
90\% more than the baselines.
Furthermore, we demonstrate that the representations obtained by LASSI can be
used to solve classification tasks that were unseen during training.

\medskip

\noindent\textbf{Main contributions} {} {} We make the following contributions:
\begin{itemize}[noitemsep,topsep=0pt]
    \item A novel input similarity metric for high-dimensional data defined via
    interpolation in the latent space of generative models.
    \item A scalable representation learning method with individual fairness
    certification for models using high-dimensional data via randomized smoothing.
    \item A large-scale evaluation of our method on various image classification tasks.
\end{itemize}

\section{Related Work}
\label{sec:related-work}

In this work, we consider individual fairness, which requires that similar
individuals be treated similarly~\cite{dwork2012fairness}.
In contrast, group fairness enforces specific classification statistics to be
equal across different groups of the
population~\cite{dwork2012fairness,hardt2016equality}.
While both fairness notions are desirable, they also both suffer from certain
shortcomings.
For instance, models satisfying group fairness may still discriminate against
individuals~\cite{dwork2012fairness} or subgroups~\cite{kearns2018preventing}.
In contrast, the central challenge limiting practical adoption of individual
fairness is the lack of a widely accepted similarity
metric~\cite{yurochkin2020training}.
While recent work has made progress in developing similarity metrics for tabular
data~\cite{ilvento2020metric,maity2021statistical,mukherjee2020two,wang2019empirical,yurochkin2021sensei},
defining similarity concisely for high-dimensional data remains challenging and
is a key contribution of our work.

\medskip

\noindent\textbf{Fair representation learning} {}
A wide range of methods has been proposed to learn fair representations of user
data.
Most of these works consider group fairness and employ techniques such as
adversarial
learning~\cite{edwards2016censoring,kehrenberg2020null,liao2019learning,madras2018learning},
disentanglement~\cite{creager2019flexibly,locatello2019fairness,sarhan2020fairness},
duality~\cite{song2019learning}, low-rank matrix
factorization~\cite{oneto2020learning}, and distribution
alignment~\cite{balunovic2021fair,louizos2016variational,zhao2020conditional}.
Fair representation learning for individual fairness has recently gained
attention, with similarity metrics based on logical
formulas~\cite{ruoss2020learning}, Wasserstein
distance~\cite{feng2019learning,lahoti2019ifair}, fairness
graphs~\cite{lahoti2019operationalizing}, and weighted
$\ell_p$-norms~\cite{zemel2013learning}.
Unfortunately, none of these approaches can capture the similarity between
individuals for the high-dimensional data we consider in our work.

\medskip

\noindent\textbf{Bias in high-dimensional data} {}
A long line of work has investigated the biases of models operating on
high-dimensional data, such as
images~\cite{wang2020towards,wilson2019predictive} and
text~\cite{bolukbasi2016man,liang2021towards,park2018reducing,tatman2017gender},
showing, \eg that black women obtain lower
accuracy in commercial face
classification~\cite{buolamwini2018gender,klare2012face,raji2019actionable}.
Importantly, these models not only learn but also amplify the biases of the
training data~\cite{hendricks2018women,zhao2017men}, even for balanced
datasets~\cite{wang2019balanced}.
A key challenge for bias mitigation in high-dimensional settings is that, unlike
tabular data, sensitive attributes such as age or skin tone are not directly
encoded as features.
Thus, prior work has often relied on generative
models~\cite{balakrishnan2020towards,dash2020counterfactual,denton2019detecting,joo2020gender,kim2018interpretability,kim2021counterfactual,lang2021explaining,li2021discover,ramaswamy2021fair,sattigeri2019fairness}
or computer simulations~\cite{mcduff2018indentifying} to manipulate these
sensitive attributes and check whether the perturbed instances are classified
the same.
However, unlike our work, these methods only tested for bias empirically and
do not provide fairness guarantees.
Recent work also explored using generative models to define
\cite{gowal2020achieving,wong2021learning} or
certify~\cite{mirman2021robustness} robustness, but without focusing on fairness.

\medskip

\noindent\textbf{Fairness certification} {}
Regulatory agencies are increasingly holding organizations accountable for
the discriminatory effects of their machine learning
models~\cite{eu2019ethics,eu2021proposal,ftc2020using,ftc2021aiming,hrc2021right}.
Accordingly, designing algorithms with fairness guarantees has become an active
area of
research~\cite{albarghouthi2017fairsquare,balunovic2021fair,bastani2019probabilistic,choi2021group,gitiaux2021learning,segal2021fairness}.
However, unlike our work, most approaches for individual fairness certification
consider pretrained models and thus cannot be employed in fair representation
learning~\cite{john2020verifying,urban2020perfectly,yeom2020individual}.
In contrast, \cite{ruoss2020learning} learn individually fair representations
with provable guarantees for low-dimensional tabular data, providing a basis
for our approach.
However, neither the similarity notions nor the certification methods employed
by \cite{ruoss2020learning} scale to high-dimensional data, which is the
primary focus of our work.

\section{Background}
\label{sec:background}

This section provides the necessary background on individual fairness, fair
representation learning, generative modeling, and randomized smoothing.

\medskip

\noindent\textbf{Individual fairness} {}
The seminal work of \cite{dwork2012fairness} defined individual fairness as
\say{treating similar individuals similarly}.
In this work, we consider the concrete instantiation of this notion from
\cite{ruoss2020learning}: an individual $\inputpoint'$ is similar to
$\inputpoint$ with respect to a binary input similarity metric
$\inputsim \colon \inputspace \times \inputspace \to \left\{0, 1\right\}$
if and only if $\inputsim(\inputpoint, \inputpoint') = 1$.
A model $\model \colon \inputspace \to \outputspace$ is individually fair
at $\inputpoint \in \inputspace$ if it classifies all individuals similar to
$\inputpoint$ (as measured by $\inputsim$) the same, \ie
\begin{equation}
    \forall \inputpoint' \in \inputspace \colon
    \inputsim\left(\inputpoint, \inputpoint'\right)
    \implies
        \model\left(\inputpoint\right) = \model\left(\inputpoint'\right).
    \label{eq:individual-fairness}
\end{equation}
For example, a credit rating algorithm is individually fair for a given
person if all similar applicants (\eg similar income and repayment history)
receive the same credit rating.
Our goal is to learn a model $\model$ that maximizes the number of points
$\inputpoint$ from the distribution for which we can \emph{guarantee} that
\cref{eq:individual-fairness} is satisfied.
Defining a suitable input similarity metric $\inputsim$ is one of the key
challenges limiting practical applications of individual fairness, and in
\cref{sec:similarity} we will show how to employ generative modeling to
overcome this obstacle for high-dimensional data.

\medskip

\noindent\textbf{Fair representation learning} {}
Fair representation learning~\cite{zemel2013learning} partitions the model
$\model \colon \inputspace \to \outputspace$ into a data producer
$\producer \colon \inputspace \to \representationspace$, which maps input points
$\inputpoint \in \inputspace$ into a representation space $\representationspace$
that satisfies a given fairness notion while maintaining downstream utility, and
a data consumer
$\consumer \colon \representationspace \to \outputspace$ that solves a downstream
task taking only the transformed data points
$\representationpoint \coloneqq \producer\left(\inputpoint\right) \in
\representationspace$ as inputs.
Importantly, the consumers (potentially indifferent to fairness) can
employ standard training methods to obtain fair classifiers that are useful
across a variety of different tasks.
We base our approach on the LCIFR framework~\cite{ruoss2020learning}, which learns
representations with individual fairness guarantees for low-dimensional tabular
data.
LCIFR defines a family of similarity notions and leverages (mixed-integer)
linear programming methods for fairness certification.
However, high-dimensional applications are out of reach for LCIFR because both
the similarity notions and linear programming methods are tailored to
low-dimensional tabular data.
In particular, similarity is defined via logical formulas operating on the
features of $\inputpoint$, which is infeasible for, \eg images, which cannot be
compared solely at the pixel level.
Moreover, while linear programming methods work well for small networks,
they do not scale to real-world computer vision models.
In this work, we show how to resolve these two key concerns to generalize the
high-level idea of LCIFR to real-world, high-dimensional applications.

\medskip

\noindent\textbf{Generative modeling} {}
Normalizing flows, such as Glow~\cite{kingma2018glow}, recently emerged as a
promising generative modeling approach due to their exact log-likelihood
evaluation, efficient inference and synthesis, and useful latent space for
downstream tasks.
Unlike GANs~\cite{goodfellow2014generative} or VAEs~\cite{kingma2014auto},
normalizing flows are bijective models consisting of an encoder
$\glowenc \colon \inputspace \rightarrow \generativespace$ and a decoder
$\glowdec \colon \generativespace \rightarrow \inputspace$ for which
$\inputpoint = \glowdec\left(\glowenc\left(\inputpoint\right)\right)$.
Glow's input space $\mathbb{R}^n$ and latent space $\mathbb{R}^q$ have
the same dimensionalities $n = q$.
Its latent space captures important data attributes, thus enabling latent
space interpolation such as changing the age of a person in an image.
While attribute manipulation via latent space interpolation has also been
investigated in the fairness context for GANs and
VAEs~\cite{balakrishnan2020towards,denton2019detecting,joo2020gender,kim2018interpretability,li2021discover,ramaswamy2021fair},
Glow's key advantages are the existence of an encoder (unlike GANs, which
cannot represent an input point in the latent space efficiently) and the
bijectivity of the end-to-end model (VAEs cannot reconstruct the input point
exactly).
Our key idea is to leverage Glow to define image similarity by interpolating
along the directions defined by certain sensitive attributes in the latent
space.

\medskip

\noindent\textbf{Smoothing} {}
Unlike (mixed-integer) linear
programming~\cite{ehlers2017formal,tjeng2019evaluating}, smoothing
approaches~\cite{cohen2019certified} can compute local robustness guarantees for
any type of classifier
$\consumer \colon \representationspace \to \outputspace$, regardless of its
complexity and scale.
To that end, \cite{cohen2019certified} construct a smoothed classifier
$\smoothconsumer \colon \representationspace \to \outputspace$, which returns the
most probable classification of $\consumer$ for an input
$\representationpoint \in \representationspace$ when perturbed by random noise
from $\mathcal{N}(0, \RSsigma^2 I)$.
Using a sampling-based approach, \cite{cohen2019certified} establish a local
robustness guarantee of the form: $\forall \bm{\delta} \in \representationspace$
such that $\|\bm{\delta}\|_2 < \RSradius$ we have
$\smoothconsumer\left(\representationpoint + \bm{\delta}\right) =
\smoothconsumer\left(\representationpoint\right)$ with probability
$1 - \RSprob$, where $\RSprob$ can be made arbitrarily small.
Thus, $\smoothconsumer$ will classify all points in the $\ell_2$-ball of radius
$\RSradius$ around $\representationpoint$ the same with high probability.
Recently, \cite{kumar2021center} introduced center smoothing, which extends this
approach from classification to multidimensional regression.
Concretely, for a function
$\lassi \colon \generativespace \to \representationspace$, center smoothing uses
sampling and approximation to compute a smooth version
$\smoothlassi \colon \generativespace \to \representationspace$, which maps
$\generativepoint \in \generativespace$ to the center point
$\CScenter \coloneqq \smoothlassi\left(\generativepoint\right)$ of a minimum enclosing
ball containing at least half of the points
$\representationpoint_i \sim \lassi(\generativepoint + \mathcal{N}(0, \CSsigma^2 I))$
for $i \in \{1, \ldots, m\}$.
Then, for $\epsilon > 0$ and $\forall \generativepoint' \in \generativespace$
such that $\|\generativepoint - \generativepoint'\|_2 \leq \epsilon$, we have
$\|\smoothlassi\left(\generativepoint\right) -
\smoothlassi\left(\generativepoint'\right)\|_2 \leq \CSradius$ with probability
at least $1 - \CSprob$.
That is, center smoothing computes a sound upper bound $\CSradius$ on the
$\ell_2$-ball of the function outputs of $\smoothlassi$ for all points in the
$\ell_2$-ball of radius $\epsilon$ around $\generativepoint$.

\section{High-Dimensional Individually Fair Representations}
\label{sec:lassi}

In this section, we describe how our method defines a set of similar
individuals (\cref{sec:similarity}), learns individually fair representations for
these points (\cref{sec:learning}), and finally, certifies individual fairness
for them (\cref{sec:certification}).
Our approach is general, but we focus on images for presentational purposes.

\subsection{Similarity via a Generative Model}
\label{sec:similarity}

We consider two individuals $\inputpoint$ and $\inputpoint'$ to be similar if
they differ only in their continuous sensitive attributes.
However, semantic attributes, such as skin color, cannot be captured
conveniently via the input features of $\inputpoint$.
Thus, our key idea is to define similarity in the latent space of a generative
model $G$.
We compute a vector $\attrvec \in \generativespace$ associated with the
sensitive attribute, such that interpolating along the direction of $\attrvec$
in the latent space and reconstructing back to the input space results in a
meaningful semantic transformation of that attribute.
There is active research investigating different ways of computing
$\attrvec$~\cite{denton2019detecting,higgins2017betaVAE,kingma2018glow,li2021discover,ramaswamy2021fair},
and we will empirically show that our method is compatible with any such method.

\medskip

\noindent\textbf{Computing $\attrvec$} {}
We define individual similarity in the latent space of Glow~\cite{kingma2018glow}.
Our method is independent of the actual computation of $\attrvec$, which we
demonstrate by instantiating four different attribute vector types.
Let $\glowpoint = \glowenc(\inputpoint)$ be the latent code of $\inputpoint$ in
the generative latent space.
First, following \cite{kingma2018glow}, we compute $\attrvec$ by calculating the
average latent vectors $\generativepoint_{G, pos}$ for samples with the attribute and
$\generativepoint_{G, neg}$ for samples without it and set $\attrvec$ to their
difference, $\attrvec = \generativepoint_{G, pos} - \generativepoint_{G, neg}$.
Second, following \cite{denton2019detecting}, we train a linear classifier
$\sign(\attrvec^{\top}\glowpoint + b)$ to predict the presence of the attribute
from $\glowpoint$ and take $\attrvec$ to be the vector orthonormal to the
decision boundary of the linear classifier.
Finally, we employ~\cite{li2021discover} and~\cite{ramaswamy2021fair} who
build on these methods,
accounting for the possible correlations between the sensitive and target attributes.
In all cases, moving in one direction of $\attrvec$ in the latent space
increases the presence of the attribute and interpolating in the opposite
direction decreases it. LASSI is independent of the sensitive attribute
vector computation and will immediately benefit from all advancements in this
area. We evaluate with vectors computed
by~\cite{kingma2018glow} and~\cite{denton2019detecting} in the main paper
(\cref{sec:experimental-evaluation}) and present further results
with vectors from~\cite{li2021discover,ramaswamy2021fair}
in~\cref{app:more-attrib-vectors}.

\medskip

\begin{wrapfigure}{R}{0.21\linewidth}
	\centering
	\resizebox{0.2\textwidth}{!}{ \input{figures/similarity} }
	\caption{Similarity in latent space.}
	\label{fig:similarity}
\end{wrapfigure}

\noindent\textbf{Individual similarity in latent space} {}
Using the generative model $G$ and the attribute vector $\attrvec$, we define
the set of individuals similar to $\inputpoint$ in the latent space of $G$ as
$\simset \coloneqq \left\{
	\glowpoint + t\cdot\attrvec \mid |t| \leq \epsilon
\right\} \subseteq \generativespace$
(bottom of \cref{fig:similarity}).
Here, $\epsilon$ denotes the maximum perturbation level
applied to the attribute.
We consider $G$, $\attrvec$, and $\epsilon$ to be a part of the similarity
specification set by the data regulator.
Crucially, $\simset$ contains an infinite number of points
but is compactly represented in the latent space of $G$ as a line segment.
In contrast, the same set represented directly in the input space,
$\simseti \coloneqq \glowdec\left(\simset\right) \subseteq \inputspace$,
obtained by decoding the latent representations in $\simset$ with
$\glowdec$, cannot be abstracted conveniently (top of \cref{fig:similarity}).
Moreover, this approach for constructing $\simset$ can be extended to multiple
sensitive attributes by interpolating along their attribute vectors
simultaneously.
Referring back to the notation in~\cref{sec:background}, we formally define the
input similarity metric $\inputsim$ to satisfy
$\inputsim\left(\inputpoint, \inputpoint'\right) \iff \inputpoint' \in \simseti$.

\subsection{Learning Individually Fair Representations}
\label{sec:learning}

Assuming that the generative model $G = (\glowenc, \glowdec)$ is pretrained
and given (\eg by the data regulator), in this section we describe the learning
of the representation $\lassi \colon \generativespace \to \representationspace$,
which maps from the generative latent space $\generativespace$ directly to the
representation space $\representationspace$.
The representation $\lassi$ is trained separately from the data consumer, the
classifier $\consumer$, whose training is explained in the next section.

\medskip

\noindent\textbf{Adversarial loss} {}
We encourage similar treatment for all points in $\simseti$ by training $\lassi$
to map them close to each other in $\representationspace$,
minimizing the loss
\begin{equation}
	\mathcal{L}_{adv}\left(\inputpoint\right) =
	\max_{\generativepoint' \in \simset}
		\|
			\lassi\left( \glowpoint \right) - \lassi\left( \generativepoint' \right)
		\|_2.
\end{equation}
Minimizing $\mathcal{L}_{adv}\left(\inputpoint\right)$ is a min-max optimization
problem, and adversarial training~\cite{madry2018towards} is known to work well
in such settings.
Because the underlying domain of the inner maximization problem is simply the
line segment $\simset$, we perform a random adversarial attack in which we
sample $s$ points $\generativepoint_i \sim \mathcal{U}\left(\simset\right)$ uniformly at
random from $\simset$ and approximate
$\mathcal{L}_{adv}\left(\inputpoint\right) \approx \max_{i=1}^s \|
\lassi\left( \glowpoint \right) - \lassi\left( \generativepoint_{i} \right)
\|_2$.
This efficient attack is typically more effective~\cite{engstrom2019exploring}
than the first-order methods such as FGSM~\cite{goodfellow2015explaining} and
PGD~\cite{madry2018towards} when the search space is low-dimensional.

\medskip

\noindent\textbf{Classification loss} {}
To ensure that the learned representations remain useful for downstream tasks,
we introduce an auxiliary classifier $C_{aux}$ to predict a ground truth target
label $y$ by adding an additional classification loss term:
\begin{equation}
	\mathcal{L}_{cls}\left(\inputpoint, y\right) = \bce \big(
		C_{aux} \circ \lassi\left(\glowpoint\right), y
		\big).
\end{equation}

\medbreak

\noindent\textbf{Reconstruction loss} {}
The downstream task may not always be known to the data producer a priori, and
thus our representations should ideally transfer to a variety of such tasks.
To that end, we optionally utilize a reconstruction loss, which is designed to
preserve the signal from the original
data~\cite{madras2018learning,ruoss2020learning}:
\begin{equation}
	\mathcal{L}_{recon}\left(\inputpoint\right) =
	\|
		\glowpoint - Q\left(\lassi\left(\glowpoint\right)\right)
	\|_2,
\end{equation}
where $Q \colon \representationspace \to \generativespace$ denotes a reconstruction
network.

The representation $\lassi$, the auxiliary classifier $\consumer_{aux}$, and the
reconstruction network $Q$ are trained jointly using stochastic gradient descent
to minimize the combined objective
\begin{equation}
	\lambda_1\mathcal{L}_{cls}\left(\inputpoint, y\right) +
	\lambda_2\mathcal{L}_{adv}\left(\inputpoint\right) +
	\lambda_3\mathcal{L}_{recon}\left(\inputpoint\right).
\end{equation}
Trading off fairness, accuracy, and transferability is a multi-objective
optimization problem, an active area of research.
Here, we follow~\cite{madras2018learning,ruoss2020learning} and use a
linear scalarization scheme, with the hyperparameters $\lambda_1$, $\lambda_2$
and $\lambda_3$ balancing the three losses, but our method is also compatible
with other schemes~\cite{lin2019pareto,martinez2020minimax,wei2020fairness}.

\subsection{Training Classifier $\consumer$}

Once we have learned the representation $\lassi$, we can use it to train any
classifier $\consumer$ (often different from the auxiliary one $\consumer_{aux}$).
As we will apply smoothing to $\consumer$, we train it by adding
isotropic Gaussian noise to its inputs during the training process, as
in~\cite{cohen2019certified}.
We use the outputs of $\lassi \circ \glowenc$ (and not the smoothed version
$\smoothlassi \circ \glowenc$) as inputs to train $\consumer$, since repeatedly
smoothing the pipeline at this step is computationally expensive and because the
distance between the smoothed and the unsmoothed outputs is generally
small~\cite{kumar2021center}.

\subsection{Certifying Individual Fairness via Latent Space Smoothing}
\label{sec:certification}

With $\lassi$ and $\consumer$ trained as described above, we now construct the
end-to-end model $\model \colon \inputspace \to \outputspace$ for which, given an
input $\inputpoint$, we can certify individual fairness of the form
\begin{equation}
	\forall \inputpoint' \in \simseti:
    \model\left(\inputpoint\right) = \model\left(\inputpoint'\right),
	\label{eq:ind-fairness-certificate-input-space}
\end{equation}
with arbitrarily high probability.

\begin{wrapfigure}{R}{0.2\textwidth}
	\centering
    \resizebox{0.2\textwidth}{!}{ \begin{tikzpicture}

    \definecolor{netinside}{RGB}{100, 190, 230}
    \definecolor{netborder}{RGB}{20, 115, 200}

    \definecolor{Pinside}{RGB}{245, 150, 50}
    \definecolor{Pborder}{RGB}{215, 95, 20}

    \definecolor{DPinside}{RGB}{98, 189, 222}
    \definecolor{DPinsideSmall}{RGB}{56, 151, 186}
    \definecolor{DPborder}{RGB}{6, 51, 117}

    \definecolor{CertifiedInside}{RGB}{200, 140, 220}
    \definecolor{CertifiedBorder}{RGB}{150, 60, 155}

    \definecolor{ClassA}{RGB}{95, 190, 60}
    \definecolor{ClassB}{RGB}{225, 55, 55}

    \definecolor{similar}{RGB}{235, 45, 150}
    \definecolor{ball}{RGB}{235, 125, 30}

    \definecolor{myblue}{RGB}{0, 64, 255}

    \pgfmathsetmacro{\xf}{2}
    \pgfmathsetmacro{\yf}{4.5}
    \pgfmathsetmacro{\xs}{\xf}

    \draw[very thick, opacity=0.17, fill=DPinsideSmall, draw=DPborder, rounded corners=15pt] (-.25, -2) -- ++(0, 8.5) -- ++(4.5, 0) -- ++(0, -8.5) -- ++(-4.5, 0) --cycle;

    \node[star, star points=4, star point ratio=0.25, fill=black, scale=1] (zGdot) at (\xf, \yf) {};
    \node[font=\footnotesize] (zG) at (\xf + .25, \yf - .25) {$\glowpoint$};

    \node[font=\footnotesize] (zG) at (\xf - 1.2, \yf - 1.625) {$\glowpoint - \epsilon \attrvec_{\text{pale}}$};
    \node[font=\footnotesize] (zG) at (\xf + 1.2, \yf + 1.625) {$\glowpoint + \epsilon \attrvec_{\text{pale}}$};

    \draw[-|, dashed] (zGdot) -- ++(1.2, 1.2);
    \draw[-|, dashed] (zGdot) -- ++(-1.2, -1.2);

    \draw[-latex, ultra thick, draw=similar] (zGdot) -- (\xf + .6, \yf + .6);
    \node[similar] (attribute) at (\xf + 1, \yf + 0.25) {$\attrvec_{\text{pale}}$};

    \node[circle, fill=black, scale=0.4] (zg1dot) at (\xf + 0.9, \yf + 0.9) {};
    \node[circle, fill=black, scale=0.4] (zg2dot) at (\xf - 0.9, \yf - 0.9) {};
    \node[circle, fill=black, scale=0.4] (zg3dot) at (\xf - 0.5, \yf - 0.5) {};

    \node (samples) at (\xf - 0.75, \yf + 0.75) {Samples};
    \draw[dotted] (samples) -- (zg1dot);
    \draw[dotted] (samples) -- (zg2dot);
    \draw[dotted] (samples) -- (zg3dot);

    \pgfmathsetmacro{\xss}{\xs - .5}
    \pgfmathsetmacro{\yss}{4.25}

    \draw[->, thick] (\xf, \yf - 1.75) -- ++(0, -0.5);
    \node[font=\large, rectangle, draw, align=center, very thick, color=netborder, fill=netinside, text=black, execute at begin node=\setlength{\baselineskip}{1.5em}] at (\xs, \yf - 2.875) {LASSI $\lassi$};
    \draw[->, thick] (\xf, \yf - 3.5) -- ++(0, -0.5);

    \pgfmathsetmacro{\xlcifr}{\xss + 0.5}
    \pgfmathsetmacro{\ysss}{\yss - 5}

    \node (DPcenterdot) at (\xlcifr, \ysss) {};
    \filldraw[draw=Pborder, ultra thick, fill=Pinside] (DPcenterdot) circle (1);
    \node[regular polygon, regular polygon sides=3, fill=black, scale=0.4] at (DPcenterdot) {};
    \node[font=\normalsize] (DPcenter) at (\xlcifr + 0.125, \ysss + .375) {$\CScenter$};
    \draw[-, thick] (DPcenterdot) -- ++(1, 0) node[midway, below, font=\normalsize] {$\CSradius$};

    \node[star, star points=4, star point ratio=0.25, fill=black, scale=1] (zLdot) at (\xlcifr - 0.5, \ysss - 0.25) {};
    \node[font=\normalsize] (zL) at (\xlcifr - 0.5, \ysss - .5) {$\lcifrpoint$};

    \node[circle, fill=black, scale=0.4] (zl1dot) at (\xlcifr - 0.5, \ysss + 0.5) {};
    \node[circle, fill=black, scale=0.4] (zl3dot) at (\xlcifr - 0.75, \ysss) {};
    \node[circle, fill=black, scale=0.4] (zl2dot) at (\xlcifr, \ysss - 0.75) {};

    \draw[dashed] (zl1dot) to[out=-150, in=90] (zl3dot) to[out=-90, in=180] (zLdot) to[out=0, in=120] (zl2dot);

\end{tikzpicture} }
	\caption{Center smoothing the similarity set.}
	\label{fig:centersmoothing}
\end{wrapfigure}

Given a point $\generativepoint$ in the latent space of $G$, we define the function
$g_\generativepoint\left(t\right) \coloneqq
\lassi\left(\generativepoint + t \cdot \attrvec\right)$ for
$t \in \mathbb{R}$.
We apply the center smoothing procedure presented by~\cite{kumar2021center} to
obtain $\widehat{g_\generativepoint}$, the smoothed version of $g_\generativepoint$,
and define
$\smoothlassi\left(\generativepoint\right) \coloneqq
\widehat{g_\generativepoint}\left(0\right)$ such that for all
$\generativepoint' \in \simset$, $\| \smoothlassi\left(\generativepoint\right) -
\smoothlassi\left(\generativepoint'\right) \|_2 \leq \CSradius$
(see~\cref{fig:centersmoothing}).
Next, we smooth the classifier $\consumer$ to obtain its
$\ell_2$-robustness radius $\RSradius$.
If $\CSradius < \RSradius$, then the end-to-end model $\model = \smoothc \circ
\smoothlassi \circ \glowenc$ certifiably satisfies individual fairness at
$\inputpoint$ (as defined in \cref{eq:ind-fairness-certificate-input-space})
with high probability.
Concretely, if we instantiate center smoothing with confidence $\CSprob$ and
randomized smoothing with confidence $\RSprob$, then the individual fairness
certificate holds with probability at least $1 - \CSprob - \RSprob$ (union
bound).
The compositional certification procedure is summarized in
\cref{alg:certify-fairness}. Its correctness is formalized in \cref{thm:endtoend}
with a detailed proof in \cref{app:theorem-proof}.

\begin{algorithm}[t]
	\caption{Certifying the individual fairness of
		$\smoothc \circ \smoothlassi \circ \glowenc$ for the input $\inputpoint$.}
	\label{alg:certify-fairness}
	\begin{algorithmic}
		\STATE \textbf{function} \textsc{Certify}($\glowenc$, $\lassi$,
		$\consumer$, $\inputpoint$)
		\STATE \quad Let $\glowpoint = \glowenc\left(\inputpoint\right)$. Then,
		$\CScenter = \smoothlassi\left(\glowpoint\right)$
		and $\CSradius$ from center smoothing~\cite{kumar2021center}.
		\STATE \quad \textbf{if} center smoothing abstained \textbf{then return}
		\textsc{Abstain}
		\STATE \quad Smooth $\consumer$~\cite{cohen2019certified}: obtain the
		certified radius $\RSradius$ around $\CScenter$ (\ie same classification)
		\STATE \quad \textbf{if} $\CSradius < \RSradius$ \textbf{then return} \textsc{Certified}
		\STATE \quad \textbf{else return} \textsc{Not Certified}
	\end{algorithmic}
\end{algorithm}

\begin{restatable}{theorem}{endtoend}
	Assume that we have a bijective generative model $G = (\glowenc, \glowdec)$
	used to define the similarity set $\simseti$ for a given input $\inputpoint$.
	Let~\cref{alg:certify-fairness} perform center
    smoothing~\cite{kumar2021center} with confidence $1 - \CSprob$ and
    randomized smoothing~\cite{cohen2019certified} with confidence $1 - \RSprob$.
	If~\cref{alg:certify-fairness} returns \textsc{Certified} for the input
    $\inputpoint$, then the end-to-end model
    $M = \smoothc \circ \smoothlassi \circ \glowenc$ is individually fair for
    $\inputpoint$ with respect to $\simseti$ with probability at least
    $1 - \CSprob - \RSprob$.
	\label{thm:endtoend}
\end{restatable}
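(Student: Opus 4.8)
The plan is to show that, on the joint success event of the two smoothing procedures, the fairness condition holds \emph{deterministically}, and then to bound the overall failure probability by a union bound. I would proceed in three moves: reduce the input-space claim to a latent-space claim using bijectivity, invoke \cref{thm:center-smoothing} to confine all similar representations to a ball of radius $\CSradius$ around $\CScenter$, and invoke \cref{thm:cohen-smoothing} to show $\smoothc$ is constant on a strictly larger ball.

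First I would unfold the end-to-end model. For any $\inputpoint' \in \simseti$ there is $\vz' = \glowpoint + t'\attrvec \in \simset$ with $t' \in [-\epsilon,\epsilon]$ and $\inputpoint' = \glowdec(\vz')$; bijectivity of $G$ gives $\glowenc \circ \glowdec = \mathrm{id}$, so $\glowenc(\inputpoint') = \vz'$. Hence $M(\inputpoint') = \smoothc(\smoothp(\vz'))$, while $M(\inputpoint) = \smoothc(\smoothp(\glowpoint)) = \smoothc(\CScenter)$. It therefore suffices to prove $\smoothc(\smoothp(\vz')) = \smoothc(\CScenter)$ for every $\vz' \in \simset$.

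Next comes the center-smoothing step, whose crux is a translation identity: since $g_{\glowpoint + t'\attrvec}(s) = P(\glowpoint + (t'+s)\attrvec) = g_{\glowpoint}(t'+s)$, smoothing the shifted function at input $0$ coincides with smoothing $g_{\glowpoint}$ at input $t'$, i.e. $\smoothp(\glowpoint + t'\attrvec) = \widehat{g_{\glowpoint}}(t')$. Applying \cref{thm:center-smoothing} to the single function $g_{\glowpoint}$ with perturbation bound $\epsilon$ then yields an event $A$ with $\prob(A) \geq 1 - \alpha_c$ on which $\|\smoothp(\vz') - \CScenter\|_2 = \|\widehat{g_{\glowpoint}}(t') - \widehat{g_{\glowpoint}}(0)\|_2 \leq \CSradius$ for all $|t'| \leq \epsilon$, that is, for all $\vz' \in \simset$. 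For the classifier, the fact that \cref{alg:certify-fairness} returned \textsc{Certified} means $\CSradius < R$, where $R$ is the radius from \cref{thm:cohen-smoothing} for $\smoothc$ around $\CScenter$; this gives an event $B$ with $\prob(B) \geq 1 - \alpha_s$ on which $\smoothc(\CScenter + \delta) = \smoothc(\CScenter)$ for every $\|\delta\|_2 < R$. On $A \cap B$, each representation satisfies $\|\smoothp(\vz') - \CScenter\|_2 \leq \CSradius < R$, so $\smoothc(\smoothp(\vz')) = \smoothc(\CScenter) = M(\inputpoint)$, establishing fairness; a union bound then gives $\prob(A \cap B) \geq 1 - \alpha_c - \alpha_s$.

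The step I expect to be the main obstacle is the probabilistic bookkeeping rather than any inequality. The classifier certificate is computed around the \emph{random}, data-dependent center $\CScenter$, so to legitimately union-bound I must argue that the classification-smoothing failure probability is at most $\alpha_s$ \emph{unconditionally}; this follows by conditioning on the realized value of $\CScenter$ and using that the randomized-smoothing sampling of \cref{thm:cohen-smoothing} is independent of the center-smoothing sampling, but it should be spelled out carefully. Secondary points that must be stated but are routine are the translation identity above and the strictness at the boundary, namely that center smoothing yields the non-strict bound $\leq \CSradius$ while the check $\CSradius < R$ upgrades it to the strict inequality $< R$ required by \cref{thm:cohen-smoothing}.
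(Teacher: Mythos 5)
Your proposal follows essentially the same route as the paper's proof: reduce the input-space claim to the latent space via bijectivity of $G$, use the translation identity $g_{\glowpoint + t'\attrvec}(t) = g_{\glowpoint}(t+t')$ to identify $\smoothp(\vz')$ with $\widehat{g_{\glowpoint}}(t')$, apply \cref{thm:center-smoothing} and \cref{thm:cohen-smoothing}, and combine $\CSradius < R$ with a union bound. The one place you go beyond the paper is the probabilistic bookkeeping around the data-dependent center $\CScenter$ (conditioning plus independence of the two sampling procedures), which the paper's proof applies the union bound over without comment — a point in your favor rather than a divergence in approach.
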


\section{Experiments}
\label{sec:experimental-evaluation}

We now evaluate LASSI and present the key findings: (i)
LASSI enforces individual fairness and keeps accuracy high, (ii) LASSI handles
various sensitive attributes and attribute vectors, and (iii) LASSI
representations transfer to unseen tasks.

\medskip

\noindent\textbf{Datasets} {}
We evaluate LASSI on two datasets.
CelebA~\cite{liu2015deep} contains \num[group-separator={,}]{202599}
aligned and cropped face images of real-world celebrities. The images are
annotated with the presence or absence of $40$ face attributes with various
correlations between them~\cite{denton2019detecting}.
As CelebA is highly imbalanced, we also experiment with
FairFace~\cite{karkkainen2021fairface}. It is balanced on race and contains
\num[group-separator={,}]{97698} released images (padding $0.25$) of individuals
from $7$ race and $9$ age groups.
We split the training set randomly ($80$:$20$ ratio) and
evaluate on the validation set because the test set is not publicly shared.
Further information about the datasets (including experimental
``unfairness'' of different attributes computed on CelebA)
is in~\cref{app:datasets}.

\medskip

\noindent\textbf{Experimental setup} {}
The following setup is used for all experiments, unless stated otherwise.
We use images of size $64$$\times$$64$, and
for each dataset pretrain a Glow model $G$ with $4$ blocks of $32$ flows,
using an open-source PyTorch~\cite{paszke2019pytorch}
implementation~\cite{seonghyeon2020glow}. We use
$\attrvec = \vz_{G, pos} - \vz_{G, neg}$ and set
$\epsilon = 1$ such that
$\simseti$ contains realistic high-quality reconstructions (confirmed by
manual inspection).
Thus, the similarity specification (\cref{sec:similarity}) for enforcing
individual fairness is determined by $G$ and the radius $\epsilon$.
We implement the representation $\lassi$ as a fully-connected
network that propagates Glow's latent code of an input
$\inputpoint$ through two hidden layers of sizes 2048 and 1024,
mapping to a 512-dimensional space.
The final layer applies zero mean
and unit variance normalization ensuring that all components of $\lassi$'s output
are in the same range when Gaussian noise is added during smoothing.
A linear classifier $\consumer$ is used for predicting the target label.

Our fairness-unaware baseline (denoted as Naive) is standard
representation learning of $\lassi$ without adversarial and reconstruction losses
($\lambda_2 = \lambda_3 = 0$). When training LASSI, we set the
classification loss weight $\lambda_1 = 1$, except for the transfer learning
experiments.
A recent work~\cite{ramaswamy2021fair} proposed generating synthetic images
with a ProGAN~\cite{karras2018progressive} to balance the dataset.
Their method is not concerned with individual fairness and their transformation
of latent representations may change other, non-sensitive attributes.
Nevertheless, we employ~\cite{ramaswamy2021fair}'s high-level idea of augmenting
the training set with synthetic samples from a generative model (Glow in our case).
For each training sample $\inputpoint$, we
synthesize and randomly sample $s$ additional images from $\simseti$ in every epoch.
Then, we proceed with representation learning of
$\lassi$ on the augmented dataset.
We denote this baseline, addapted to the individual fairness setting, as DataAug.
We do not compare with LCIFR~\cite{ruoss2020learning} as our individual similarity
specifications cannot be directly encoded as logical formulas over the input
features of $\inputpoint$ and because its certification is based on
expensive solvers that do not scale to Glow and large models.

We list all selected hyperparameters for all experiments, based on an
an extensive hyperparameter search on the validation sets,
in~\cref{app:hyperparam-tuning} (details provided for the CelebA dataset).
The hyperparameter study shows that LASSI works for a wide range of
hyperparameter values and demonstrates that $\lambda_2$ controls the trade-off
between accuracy and fairness. We report the accuracy and
the certified individual fairness of the models measured on 312 samples from
CelebA's test set (every 64-th) and 343 samples from FairFace's test set
(every 32-nd). The certified fairness refers to the percentage of test samples for
which~\cref{alg:certify-fairness} returns \textsc{Certified}, \ie for which we
can prove that~\cref{eq:ind-fairness-certificate-input-space} holds, guaranteeing
that all similar individuals (according to our similarity definition) are classified
the same. This metric is denoted as ``Fair'' in the tables.
The evaluation of a single data point takes up to 6 seconds due to the sampling
required by the smoothing procedures, which is why we do not report results on
the whole test sets.
We ran the experiments on GeForce RTX 2080 Ti GPUs
and release all the code and models to reproduce our results at
\url{https://github.com/eth-sri/lassi}.

\begin{figure}[tb]
    \centering
    \subfloat[\texttt{Pale\_Skin}]{
		\includegraphics[width=0.45\linewidth]{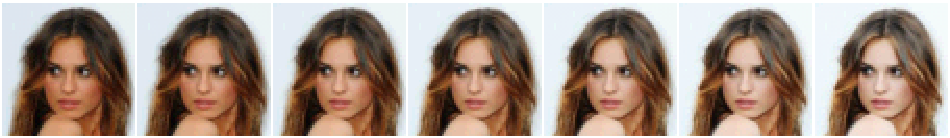}
    }\hfil
    \subfloat[\texttt{Young}]{
	    \includegraphics[width=0.45\linewidth]{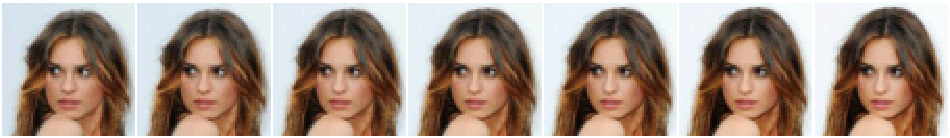}
    }

    \subfloat[\texttt{Blond\_Hair}]{
	    \includegraphics[width=0.45\linewidth]{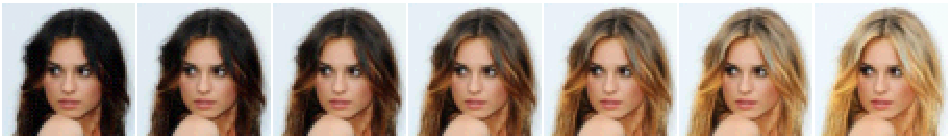}
    }\hfil
    \subfloat[\texttt{Heavy\_Makeup}]{
	    \includegraphics[width=0.45\linewidth]{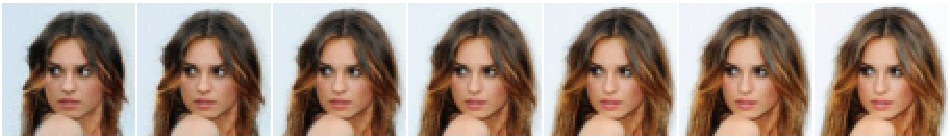}
    }

    \subfloat[\texttt{Pale\_Skin + Young}]{
	    \includegraphics[width=0.45\linewidth]{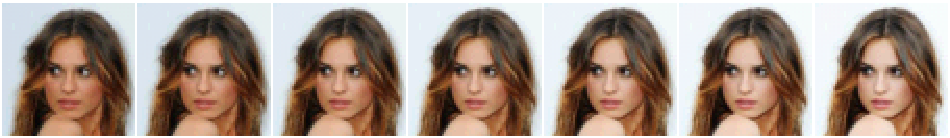}
    }\hfil
    \subfloat[\texttt{Pale\_Sking + Young + Blond}]{
	    \includegraphics[width=0.45\linewidth]{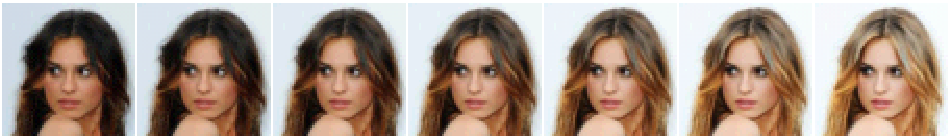}
    }

    \caption{
        Similar points from $\simseti$,
		as reconstructed by Glow, for multiple sensitive attribute
		combinations. Central images correspond to
		the original input. We vary $t$ uniformly (left to right) in
		the $[-\frac{\epsilon}{\sqrt{n}}, \frac{\epsilon}{\sqrt{n}}]$
        range, $n =$ number of sensitive attributes, $\epsilon = 1$. For $n > 1$,
        all attribute vectors are multiplied by the same $t$.
    }
	\label{fig:similarity-set-main-text}
\end{figure}

\begin{table*}[tb]
    \caption{
        Evaluation of LASSI on the CelebA dataset, showing that LASSI significantly
        increases certified individual fairness compared to the baselines without
        affecting the classification accuracy, even increasing it for imbalanced
        tasks. Reported means averaged over $5$ runs,
        see~\cref{app:more-experimental-results-stddev} for standard deviations.
    }
    \begin{center}
    \resizebox{0.95\linewidth}{!}{\begin{tabular}{@{}lP{0.03cm}lcP{1cm}P{1cm}cP{1cm}P{1cm}cP{1cm}P{1cm}@{}}
    \toprule
    & & & & \multicolumn{2}{c}{Naive} & & \multicolumn{2}{c}{DataAug} & & \multicolumn{2}{c}{LASSI (ours)} \\
    \cmidrule{5-6}
    \cmidrule{8-9}
    \cmidrule{11-12}
    Task & & Sensitive attribute(s) & & Acc & Fair && Acc & Fair && Acc & Fair \\
    \midrule
    \multirow{6}{*}{\texttt{Smiling}} && \texttt{Pale\_Skin} & & \textbf{86.3} & 0.6 && 85.7 & 12.2 && 85.9 & \textbf{98.0} \\
    & & \texttt{Young} & & \textbf{86.3} & 38.2 && 85.9 & 43.0 && \textbf{86.3} & \textbf{98.8} \\
    & & \texttt{Blond\_Hair} & & 86.3 & 3.4 && \textbf{86.6} & 9.4 && 86.4 & \textbf{94.7} \\
    & & \texttt{Heavy\_Makeup} & & \textbf{86.3} & 0.4 && 85.3 & 13.7 && 85.6 & \textbf{91.3} \\
    & & \texttt{Pale+Young} & & \textbf{86.0} & 0.4 && 85.8 & 9.9 && 85.8 & \textbf{97.3} \\
    & & \texttt{Pale+Young+Blond} & & 86.2 & 0.0 && \textbf{86.4} & 3.6 && 85.5 & \textbf{86.5} \\
    \midrule
    \multirow{4}{*}{\texttt{Earrings}} && \texttt{Pale\_Skin} & & 81.3 & 24.3 && 81.0 & 40.4 && \textbf{85.0} & \textbf{98.5} \\
    & & \texttt{Young} & & 81.4 & 59.2 && 79.9 & 72.0 && \textbf{84.5} & \textbf{98.0} \\
    & & \texttt{Blond\_Hair} & & 81.4 & 9.2 && 82.2 & 30.5 && \textbf{84.8} & \textbf{96.2} \\
    & & \texttt{Heavy\_Makeup} & & 81.6 & 20.5 && 80.3 & 49.2 && \textbf{82.3} & \textbf{98.7} \\
    \bottomrule
\end{tabular}
}
    \end{center}
    \label{tbl:results}
\end{table*}

\medskip

\noindent\textbf{Single sensitive attribute} {}
We experiment with $4$ different continuous sensitive attributes from CelebA:
\texttt{Pale\_Skin}, \texttt{Young}, \texttt{Blond\_Hair} and
\texttt{Heavy\_Makeup} on two tasks: predicting \texttt{Smiling} and
\texttt{Earrings}. We chose attributes with different balance ratios
that have been used in prior work~\cite{denton2019detecting}, while
avoiding attributes that perpetuate harmful stereotypes~\cite{denton2019detecting}
(\eg avoiding \texttt{Male}). Glow can also be used to generate discrete
attributes, but then fairness certification can be done via
enumeration because partial eyeglasses or hats, for example, are not plausible.
\cref{fig:similarity-set-main-text} provides example images from
$\simseti$ for a single $\inputpoint$.
The \texttt{Earrings} task is considerably more imbalanced than
\texttt{Smiling}, with $78.21$\% majority class accuracy on our
test subset. Because of the high correlation between
\texttt{Earrings} and \texttt{Makeup}, we run LASSI with increased $\lambda_2$ for
this pair of attributes.

We show the results in~\cref{tbl:results} averaged over $5$ runs with different
random seeds.
The results indicate that data augmentation helps, but is not enough.
LASSI significantly improves the certified fairness, compared to the baselines,
with a minor loss of accuracy on \texttt{Smiling} and even acts as a helpful
regularizer on the imbalanced \texttt{Earrings} task.
In~\cref{app:more-experimental-results-stddev} we report the standard deviations
demonstrating that LASSI consistently enforces individual fairness with low
variance and further evaluate empirical (\ie non-certifiable) fairness metrics.

\medskip

\noindent\textbf{Multiple sensitive attributes} {}
In the next experiment, we combine the sensitive attributes
\texttt{Pale\_Skin}, \texttt{Young} and \texttt{Blond\_Hair} and predict
\texttt{Smiling}. The similarity sets \wrt which we certify individual fairness
are defined as
$\simsetl = \{
\glowenc\left(\inputpoint\right)
+ \sum_{i} t_i \cdot \attrvec_i
\mid
\|\vt\|_2 \leq \epsilon\}$.
The results in~\cref{tbl:results} (rows $5$ -- $6$) show that the
certified fairness drops as the similarity sets become more complex, as
expected, but LASSI still successfully enforces individual fairness in
these cases.

\medskip

\noindent\textbf{Larger images and different attribute vectors} {}
Next, we explore if LASSI can also work with larger images. We increase the
dimensionality of the CelebA images to $128$$\times$$128$, pretrain Glow with 5
blocks and keep the rest of the hyperparameters the same. The results are
consistent with those already presented in~\cref{tbl:results}: LASSI
increases the certified individual fairness by up to 77\% on the \texttt{Smiling}
task (see~\cref{app:more-experimental-results-stddev} for detailed results).
We also instantiate LASSI with the alternative attribute vector
type~\cite{denton2019detecting}
introduced in~\cref{sec:similarity} (with $\epsilon = 10$). Although interpolating
along the vector which is perpendicular to the linear decision boundary of the sensitive
attribute
possibly reduces the correlations leaked into the similarity sets,
\cref{tbl:denton-attrib-vec} shows that
LASSI still improves the certified fairness by up to 16\% compared to the
baselines. This improvement is 9.7\% and 6.1\% for the attribute vectors
proposed by~\cite{ramaswamy2021fair} and~\cite{li2021discover} respectively,
further demonstrating that LASSI can be useful for various attribute vector types.
More details about these experiments are provided
in~\cref{app:more-attrib-vectors}.

\begin{table*}[tb]
	\caption{
        Evaluation with $\attrvec$ perpendicular to the
        linear decision boundary of the sensitive attribute~\cite{denton2019detecting}
        (\cref{sec:similarity})
        on the \texttt{Smiling} task, showing that LASSI is not limited to
        a specific attribute vector type.
	}
	\begin{center}
		\resizebox{0.85\linewidth}{!}{\begin{tabular}{@{}lcP{1cm}P{1cm}cP{1cm}P{1cm}cP{1cm}P{1cm}@{}}
    \toprule
    & & \multicolumn{2}{c}{Naive} && \multicolumn{2}{c}{DataAug} && \multicolumn{2}{c}{LASSI (ours)} \\
    \cmidrule{3-4}
    \cmidrule{6-7}
    \cmidrule{9-10}
    Sensitive attribute(s) & & Acc & Fair && Acc & Fair && Acc & Fair \\
    \midrule
    \texttt{Pale\_Skin} & & 86.4 & 34.0 && 85.9 & 90.3 && \textbf{86.5} & \textbf{98.8} \\
    \texttt{Young} & & 86.3 & 73.1 && 86.2 & 90.3 && \textbf{86.8} & \textbf{97.9} \\
    \texttt{Blond\_Hair} & & 86.2 & 71.4 && 86.1 & 88.8 && \textbf{86.7} & \textbf{98.8} \\
    \texttt{Heavy\_Makeup} & & 86.2 & 11.5 && 86.3 & 87.4 && \textbf{86.8} & \textbf{98.8} \\
    \texttt{Pale+Young} & & 86.2 & 28.6 && 85.8 & 84.7 && \textbf{86.5} & \textbf{98.6} \\
    \texttt{Pale+Young+Blond} & & 86.2 & 23.7 && 85.9 & 82.2 && \textbf{86.4} & \textbf{98.7} \\
    \bottomrule
\end{tabular}
}
	\end{center}
	\label{tbl:denton-attrib-vec}
\end{table*}

\medskip

\noindent\textbf{Transfer learning} {}
To demonstrate the modularity of our approach, we show that LASSI can learn fair
and transferable representations
which are useful for unseen downstream tasks. To that end, we turn off
the classification loss, consistent with prior work~\cite{madras2018learning}
($\lambda_1 = 0$, \ie the representation $\lassi$ is
trained unsupervised), and enable the reconstruction loss ($\lambda_3 = 0.1$).
The reconstruction network $Q$ has an architecture symmetric to that of $\lassi$.
In~\cref{tbl:transfer} we report the accuracies and the certified fairness on
$7$ different, relatively well-balanced, downstream tasks.
The models perform slightly worse compared to the case where the downstream task
is known in advance, but the obtained certified individual fairness is still
consistently high -- more than 80\% for the most complex similarity specification
(\texttt{P+Y+B})
and above 90\% for the simpler ones. Standard deviations and baseline accuracies on
these tasks are reported in~\cref{app:more-experimental-results-stddev}.

\begin{table*}[tb]
    \caption{
        Transfer learning results, demonstrating that LASSI can still achieve
        high certified individual fairness even when the downstream tasks are not
        known.
    }
    \begin{center}
    \resizebox{\linewidth}{!}{\begin{tabular}{@{}lcP{0.75cm}cP{0.75cm}ccP{0.75cm}cP{0.75cm}ccP{0.75cm}cP{0.75cm}ccP{0.75cm}cP{0.75cm}ccP{0.75cm}cP{0.75cm}@{}}
    \toprule
    Sens. attrib.: & & \multicolumn{3}{c}{\texttt{Pale} (\texttt{P})} &&& \multicolumn{3}{c}{\texttt{Young} (\texttt{Y})} &&& \multicolumn{3}{c}{\texttt{Blond} (\texttt{B})} &&& \multicolumn{3}{c}{\texttt{P + Y}} &&& \multicolumn{3}{c}{\texttt{P + Y + B}}\\
    \cmidrule{3-5}
    \cmidrule{8-10}
    \cmidrule{13-15}
    \cmidrule{18-20}
    \cmidrule{23-25}
    Transfer task & & Acc && Fair &&& Acc && Fair &&& Acc && Fair &&& Acc && Fair &&& Acc && Fair \\
    \midrule
    \texttt{Smiling} & & 86.2 && 93.1 &&& 86.0 && 95.4 &&& 85.1 && 93.8 &&& 85.9 && 92.2 &&& 85.1 && 87.0 \\
    \texttt{High\_Cheeks} & & 81.7 && 92.6 &&& 82.3 && 96.0 &&& 81.3 && 92.2 &&& 80.8 && 93.0 &&& 80.6 && 84.5 \\
    \texttt{Mouth\_Open} & & 81.5 && 91.2 &&& 82.4 && 94.3 &&& 82.4 && 87.5 &&& 81.6 && 90.1 &&& 82.5 && 80.8 \\
    \texttt{Lipstick} & & 88.3 && 94.0 &&& 85.8 && 95.8 &&& 86.8 && 91.2 &&& 85.1 && 90.6 &&& 86.2 && 81.0 \\
    \texttt{Heavy\_Makeup} & & 86.5 && 93.0 &&& 83.5 && 95.3 &&& 85.6 && 89.3 &&& 83.7 && 90.0 &&& 83.3 && 80.4 \\
    \texttt{Wavy\_Hair} & & 79.2 && 93.3 &&& 77.5 && 95.8 &&& 78.0 && 91.3 &&& 77.6 && 91.5 &&& 78.8 && 85.3 \\
    \texttt{Eyebrows} & & 78.3 && 92.1 &&& 78.3 && 94.7 &&& 78.9 && 89.6 &&& 77.8 && 92.2 &&& 78.7 && 85.6 \\
    \bottomrule
\end{tabular}
}
    \end{center}
    \label{tbl:transfer}
\end{table*}

\medskip

\noindent\textbf{Training on FairFace dataset} {}
To verify that LASSI works well in different settings,
we also evaluate on the FairFace~\cite{karkkainen2021fairface} dataset.
We select \texttt{Race=Black} as a sensitive attribute and predict \texttt{Age}.
This is a very challenging multi-class task with around $60$\% state of the art
accuracy. Therefore, we create two easier tasks: \texttt{Age-2}, predicting if
an individual is younger or older than $30$, and \texttt{Age-3} with three target ranges:
$[0-19]$, $[20-39]$, and $40+$. \cref{tbl:fairface-main-text} reports the
results for $\epsilon=0.5$.
We verify that transfer learning also works
in this setup by training on \texttt{Age-2} and then transferring the representations
to all three tasks. As the tasks are related, increasing the classification loss
weight $\lambda_1$ on the base task from $0$ to $0.01$,
increases both the
transfer downstream accuracy and the certified fairness. The highest certified
fairness is generally obtained when the downstream task is known and the model is
trained on it (LASSI, $\lambda_1 = 1$).

\begin{table*}[tb]
    \caption{
        Results on FairFace, showing that LASSI can significantly
        improve the certified individual fairness even on balanced datasets.
        The adversarial loss weight is $\lambda_2 = 0.1$ for all
        models except Naive, the
        transfer models are trained on \texttt{Age-2}
        with reconstruction loss weight $\lambda_3 = 0.1$. LASSI is
        trained on the corresponding tasks with adversarial but without
        reconstruction loss
        ($\lambda_1 = 1$, $\lambda_3 = 0$).
    }
    \begin{center}
    \resizebox{0.9\linewidth}{!}{\begin{tabular}{@{}lcccccccccccccccccccccccc@{}}
    \toprule
    & & \multicolumn{3}{c}{Naive} &&& \multicolumn{3}{c}{DataAug} &&& \multicolumn{3}{c}{$\text{Transfer}_{\lambda_1=0}$} &&& \multicolumn{3}{c}{$\text{Transfer}_{\lambda_1=0.01}$} &&& \multicolumn{3}{c}{LASSI}\\
    \cmidrule{3-5}
    \cmidrule{8-10}
    \cmidrule{13-15}
    \cmidrule{18-20}
    \cmidrule{23-25}
    Task & & Acc && Fair &&& Acc && Fair &&& Acc && Fair &&& Acc && Fair &&& Acc && Fair \\
    \midrule
    \texttt{Age-2} & & 69.0 && 5.7 &&& 68.9 && 4.8 &&& 66.4 && 91.7 &&& \textbf{74.9} && 91.7 &&& 72.0 && \textbf{95.0} \\
    \texttt{Age-3} & & 67.0 && 0.0 &&& 67.1 && 0.6 &&& 63.0 && 85.6 &&& \textbf{67.7} && 88.0 &&& 65.1 && \textbf{90.8} \\
    \texttt{Age} (all) & & \textbf{42.2} && 0.0 &&& 39.9 && 0.0 &&& 34.3 && 72.0 &&& 37.1 && \textbf{77.5} &&& 41.5 && 65.9 \\
    \bottomrule
\end{tabular}
}
    \end{center}
    \label{tbl:fairface-main-text}
\end{table*}

\section{Limitations and Future Work}

We now discuss some of the limitations of LASSI.
First, our method trains individually fair models, but it does not guarantee
that models satisfy other fairness notions, \eg group fairness.
While individual fairness is a well-studied research area, recent work argues
that it does not qualify as a valid fairness notion as it can be insufficient to
guarantee fairness in certain instances and risks encoding implicit human
biases~\cite{fleisher2021whats}.
Moreover, the validity of our fairness certificates depends heavily on the
generative model used by LASSI.
In particular, the similarity sets $\simset$ considered in our work may not be
exhaustive enough as there can be latent points outside $\simset$ that
correspond to input points that would be perceived as similar to $\inputpoint$
by a human observer.
This can also happen if the generative model is not powerful enough to generate
all possible instances and combinations of similar individuals.
For the above reasons, it is hard to obtain formal guarantees about $G$ and the
computed certificates may not always transfer from $G$ to the real world.
We explore this issue further in~\cref{app:3dshapes} where we experiment with
3D Shapes~\cite{burgess2018shapes}, a procedurally generated dataset
with known ground truth similarity sets.
Future work can consider addressing these challanges by performing extensive manual
human inspection of reconstructions produced by $G$ (similar to
\cref{app:glow-reconstructions}).
Moreover, all future advancements in the active research area of normalizing
flows will immediately improve the quality of our certificates.

\section{Conclusion}
\label{sec:conclusion}

We proposed LASSI, which defines image similarity with respect to a generative model via
attribute manipulation, allowing us to capture complex image transformations
such as changing the age or skin color, which are
otherwise difficult to characterize. Further, we were able to scale certified
representation learning for individual fairness to real-world high-dimensional
datasets by using randomized smoothing-based techniques. Our extensive
evaluation yields promising results on several datasets and illustrates the practicality of our
approach.

\medskip

\noindent\textbf{Acknowledgments} {}
We thank Seyedmorteza Sadat for his help with preliminary investigations and the
anonymous reviewers for their insightful feedback.

\message{^^JLASTBODYPAGE \thepage^^J}

\bibliographystyle{splncs04}
\bibliography{references}

\message{^^JLASTREFERENCESPAGE \thepage^^J}

\ifbool{includeappendix}{%
	\clearpage
	\appendix
	\section*{Ethics Statement}
\label{sec:ethics-statement}

This work proposed a novel method for certifying the individual
fairness of models operating on high-dimensional data. Progress on this challenging
problem could enable fairness auditing for high-risk computer vision applications,
such as facial recognition. Recent work~\cite{stark2019facial}
argues that facial recognition algorithms can have undesirable,
socially toxic, and divisive consequences. For instance, it was demonstrated
that they may perpetuate and reinforce racial and gender
bias~\cite{buolamwini2018gender,denton2019detecting}. Therefore, they
must be applied carefully, considering the social dynamics and context
in which they occur. Accordingly, following prior work~\cite{denton2019detecting},
we refrained from using unstable social constructs, such as gender, or normatively
judgemental attributes, such as ``attractive'' or ``chubby'', in our research.

One way to limit the potential harms of facial analysis technologies is to control
and regulate their usage. Our work aims to help fill this gap by presenting a
methodology for enforcing individual fairness via certification. As highlighted in
our paper, we acknowledge that the quality of the generative models is a significant
bottleneck of our certificates. For example, they may encode various biases present in
the data. Another possible source of bias is the human perception and social
constructs which can potentially impact the validity of our similarity specifications.
Nevertheless, we believe that we can still leverage generative models and their
latent space to construct more meaningful individual fairness specifications on
high-dimensional data than those allowed by prior work. More broadly,
developing rigorous, standardized processes for auditing and certifying facial
recognition models (including human inspection, \eg by considering the reconstructed
images in~\cref{app:glow-reconstructions}) should complement the contributions
presented in our work. Finally, future quality advancements in generative modelling
and normalizing flows can directly translate into stronger guarantees of our method,
enabling certified fair application of models using rich, high-dimensional data.

\section{Proof of~\cref{thm:endtoend}}
\label{app:theorem-proof}

This section provides a formal proof of the following:

\endtoend*

\noindent To prove~\cref{thm:endtoend}, we will make use of the following
randomized and center smoothing theorems proved in the literature:

\begin{theorem}[Adapted from \cite{cohen2019certified}]
	Let $\consumer \colon \representationspace \to \outputspace$ be a classifier
	and let $\bm{\varepsilon} \sim \mathcal{N}(0, \RSsigma^2 I)$. Let $\smoothc$
	be defined such that $\smoothc\left(\representationpoint\right) =
	\argmax_{c \in \outputspace}
	\prob_{\bm{\varepsilon}}(\consumer(\representationpoint + \bm{\varepsilon}) = c)$.
    Suppose $c_A \in \mathcal{Y}$ and
	$\underline{p_A}, \overline{p_B} \in [0,1]$ satisfy:
    \begin{equation}
        \prob_{\bm{\varepsilon}}
        (\consumer(\representationpoint + \bm{\varepsilon}) = c_A)
        \geq
        \underline{p_A}
        \geq
        \overline{p_B}
        \geq
        \max_{c_B \neq c_A}
		\prob_{\bm{\varepsilon}}
		(\consumer(\representationpoint + \bm{\varepsilon}) = c_B).
    \end{equation}
    Then $\smoothc(\representationpoint + \bm{\delta}) = c_A$ for all $\bm{\delta}$
	satisfying $\| \bm{\delta} \|_2 < \RSradius$, where
	$\RSradius \coloneqq
	\tfrac{\RSsigma}{2}(\Phi^{-1}(\underline{p_A}) - \Phi^{-1}(\overline{p_B}))$.
    \label{thm:cohen-smoothing}
\end{theorem}

\noindent Here, $\outputspace$ denotes the set of class labels, $\Phi$ is the
cumulative distribution function (CDF) of the standard normal distribution
$\mathcal{N}(0, 1)$, and $\Phi^{-1}$ is its inverse.

\begin{theorem}[Adapted from \cite{kumar2021center}]
	Let $g \colon \mathbb{R}^a \to \representationspace$ and
	$\hat{g} \colon \mathbb{R}^a \to \representationspace$ is an approximation
	of the smoothed version of $g$, which maps $\vt \in \mathbb{R}^a$ to the center
	point $\hat{g}\left(\vt\right)$ of a minimum enclosing
	ball containing at least half of the points
	$\representationpoint_i \sim g(\vt + \mathcal{N}(0, \CSsigma^2 I))$,
	$i \in \{1, \ldots, m\}$. Then, for $\epsilon > 0$, with probability at least
	$1 - \CSprob$ we have,
    \begin{equation}
        \forall \vt' \text{ s.t. }
            \|\vt - \vt'\|_2 \leq \epsilon,
            \|\hat{g}(\vt) - \hat{g}(\vt')\|_2 \leq \CSradius.
    \end{equation}
\label{thm:center-smoothing}
\end{theorem}

\noindent We now proceed to proving~\cref{thm:endtoend}:

\begin{proof}
	Assume that~\cref{alg:certify-fairness} returns \textsc{Certified} for the
	input $\inputpoint$. We need to show that with probability at least $1 - \CSprob - \RSprob$
	\begin{equation}
		\forall \inputpoint' \in \simseti:
		M\left(\inputpoint\right) = M\left(\inputpoint'\right),
		\tag{Eq. \ref{eq:ind-fairness-certificate-input-space}}
	\end{equation}
	where $\model = \smoothc \circ \smoothlassi \circ \glowenc$.
	By the definition of $\simseti$ and $\glowenc$ being the inverse of
	$\glowdec$, we have for all $\inputpoint' \in \simseti$, $\generativepoint'
	= \glowenc(\inputpoint') \in \simset$, hence it suffices to prove
	\begin{equation}
		\forall \generativepoint' \in \simset:
		\smoothc \circ \smoothlassi \left(\glowpoint\right) =
		\smoothc \circ \smoothlassi \left(\generativepoint'\right),
		\label{eq:ind-fairness-certificate-latent-space}
	\end{equation}
	where $\glowpoint = \glowenc\left(\inputpoint\right)$.

	Next, recall the definition of
	$g_\generativepoint\left(t\right) \coloneqq
	\lassi\left(\generativepoint + t \cdot \attrvec\right)$
	and note that for
	$\generativepoint' = \generativepoint + t' \cdot \attrvec$,
	the center smoothing of
	\begin{align*}
		\widehat{g_{\generativepoint'}}\left(t\right)&
		\text{: samples from }
		g_{\generativepoint'}\left(t + \mathcal{N}(0, \CSsigma^2)\right)
		= \lassi\left(
			\generativepoint' + \left(t + \mathcal{N}(0, \CSsigma^2)\right)
			\cdot \attrvec
		\right)\text{;}
		\\
		\widehat{g_{\generativepoint}}\left(t + t'\right)&
		\text{: samples from }
		g_{\generativepoint}\left(t + t' + \mathcal{N}(0, \CSsigma^2)\right)
		= \lassi\left(
			\generativepoint + \left(t + t' + \mathcal{N}(0, \CSsigma^2)\right)
			\cdot \attrvec
		\right).
	\end{align*}
	Since $\generativepoint' = \generativepoint + t' \cdot \attrvec$, the sampling
	distributions are the same, hence
	$\widehat{g_{\generativepoint'}}\left(t\right) =
	\widehat{g_{\generativepoint}}\left(t + t'\right)$, and in particular
	$\smoothlassi\left(\generativepoint'\right) =
	\widehat{g_{\generativepoint'}}\left(0\right) =
	\widehat{g_{\generativepoint}}\left(t'\right)$.

	Now, let us get back to~\cref{eq:ind-fairness-certificate-latent-space}.
	By definition of $\simset$, for all $\generativepoint' \in \simset$,
	$\generativepoint' = \glowpoint + t' \cdot \attrvec$ for some
	$t' \in \left[-\epsilon, \epsilon\right]$. Moreover,
	$\CScenter = \smoothlassi\left(\glowpoint\right)
	= \widehat{g_{\glowpoint}}\left(0\right)$ and
	$\smoothlassi\left(\generativepoint'\right) =
	\widehat{g_{\glowpoint}}\left(t'\right)$.
	\cref{thm:center-smoothing} tells us that with probability at least $1 - \CSprob$
	\begin{equation}
		\begin{aligned}
			&\forall t' \in \left[-\epsilon, \epsilon\right]. \text{ }
			\|
			\widehat{g_{\glowpoint}}\left(0\right) -
			\widehat{g_{\glowpoint}}\left(t'\right)
			\|_2 \leq \CSradius\\
			\iff&\forall \generativepoint' \in \simset. \text{ }
			\|
			\CScenter -
			\smoothlassi\left(\generativepoint'\right)
			\|_2 \leq \CSradius,
		\end{aligned}
		\label{eq:center-smoothing-ineq}
	\end{equation}
	provided that the center smoothing computation of $\CScenter$ does not
	abstain.

	Finally, we consider the last component of the pipeline -- the smoothed
	classifier $\smoothc$. Provided that $\smoothc$ does not abstain at the
	input $\CScenter$,~\cref{thm:cohen-smoothing} provides us with a radius $\RSradius$
	around $\CScenter$ such that with probability at least $1 - \RSprob$
	\begin{equation}
		\begin{aligned}
			&\forall \bm{\delta} \text{ s.t. } \|\bm{\delta}\|_2 < \RSradius, \text{ }
			\smoothc\left(\CScenter\right) =
			\smoothc\left(\CScenter + \bm{\delta}\right)\\
			\iff&\forall \vr' \text{ s.t. } \|\CScenter - \vr'\|_2 < \RSradius, \text{ }
			\smoothc\left(\CScenter\right) =
			\smoothc\left(\vr'\right).
		\end{aligned}
		\label{eq:cohen-smoothing-ineq}
	\end{equation}
	If~\cref{alg:certify-fairness} returns \textsc{Certified}, that is
	$\CSradius < \RSradius$, combining \cref{eq:center-smoothing-ineq} and
	(\ref{eq:cohen-smoothing-ineq}) and applying the union bound, we obtain that
	with probability at least $1 - \CSprob - \RSprob$ we have
	$\smoothc\left(\CScenter\right) =
	\smoothc\left(\smoothlassi\left(\vz'\right)\right)$ for all
	$\generativepoint' \in \simset$.
	That is,
	\begin{equation}
		\forall \generativepoint' \in \simset:
		\smoothc \circ \smoothlassi\left(\glowpoint\right) =
		\smoothc \circ \smoothlassi\left(\generativepoint'\right),
	\end{equation}
	as required by \cref{eq:ind-fairness-certificate-latent-space}.
	The same proof technique can also be extended to the multiple attribute vectors
	case.
	\qed
\end{proof}

\section{Datasets and Dataset Statistics}
\label{app:datasets}

In this section we provide further information and statistics about the datasets
used in this work.
CelebA\footnote{\url{https://mmlab.ie.cuhk.edu.hk/projects/CelebA.html}}~\cite{liu2015deep}
is restricted to non-commercial research and education purposes and
its authors \cite{liu2015deep} do not own the copyrights.
FairFace~\cite{karkkainen2021fairface} is licensed under CC BY 4.0.
\cref{tbl:appendix-sens-attributes} contains statistics about the sensitive
attributes and their corresponding attribute vectors.
The lengths of the CelebA attribute vectors are computed for 64$\times$64 images.

In~\cref{tbl:appendix-base-acc} we report the base accuracies of two standard
classifiers trained on the \texttt{Smiling} and \texttt{Earrings} CelebA tasks.
The first classifier is a ResNet-18 network trained directly on the original images.
The other one is a fully connected network operating on their Glow latent
representations, $\glowpoint = \glowenc\left(\inputpoint\right)$. We remark
that none of these classifiers involves representation learning.
We report the means and standard deviations, averaged over 5 runs with different
random seeds, on the validation and test sets, where the test set is the same
subset on which we report the results in the main paper. The base accuracies on
the downstream tasks used for the transfer learning experiments are reported
in~\cref{app:more-experimental-results-stddev}.

In order to estimate the relative ``unfairness'' associated with each
sensitive attribute, in~\cref{tbl:appendix-empirical-fair} we compute the
empirical individual fairness of the two classifiers. For each data point
$\inputpoint$, we sample 9 points from $\simseti$ evenly (15 points for
\texttt{Pale+Young+Blond}). If all samples are classified the same, we
add the original data point $\inputpoint$ to the empirical fairness counter.
Note that this procedure cannot certify that all points from $\simseti$ are
classified the same. Therefore, these results come with no provable guarantees
and serve as upper bounds of the certified individual fairness of the classifiers.

\begin{table*}[hbt!]
    \caption{
        Sensitive attribute statistics. The positive and negative sample ratio
		is reported for the training set, as the attribute vectors are
		computed on it.
    }
    \begin{center}
    \resizebox{0.825\linewidth}{!}{\begin{tabular}{@{}lP{0.03cm}lcccP{0.03cm}c@{}}
    \toprule
    Dataset & & Sensitive attribute & & Pos (\%) & Neg (\%) && $\|\vz_{G, pos} - \vz_{G, neg}\|_2$ \\
    \midrule
    \multirow{4}{*}{CelebA} && \texttt{Pale\_Skin} & & 4.3 & 95.7 && 11.5 \\
    & & \texttt{Young} & & 77.9 & 22.1 && 7.8 \\
    & & \texttt{Blond\_Hair} & & 14.9 & 85.1 && 15.8 \\
    & & \texttt{Heavy\_Makeup} & & 38.4 & 61.6 && 11.9 \\
    \midrule
    FairFace && \texttt{Race=Black} & & 14.1 & 85.9 && 10.9 \\
    \bottomrule
\end{tabular}
}
    \end{center}
    \label{tbl:appendix-sens-attributes}
\end{table*}

\begin{table*}[hbt!]
    \caption{
		Baseline accuracies for the \texttt{Smiling} and \texttt{Earrings} CelebA
		tasks. The ResNet-18 classifier takes the original images as an input,
		while the $\glowpoint$ classifier is a fully connected network
		classifying their Glow latent representations. Neither of these
		classifiers involves representation learning.
    }
    \begin{center}
    \resizebox{0.9\linewidth}{!}{\begin{tabular}{@{}lP{0.03cm}ccP{0.05cm}cP{0.025cm}cP{0.05cm}cP{0.025cm}c@{}}
    \toprule
    & & \multicolumn{2}{c}{Majority class} & & \multicolumn{3}{c}{Acc (ResNet-18)} & & \multicolumn{3}{c}{Acc ($\glowpoint$)} \\
    \cmidrule{3-4}
    \cmidrule{6-8}
    \cmidrule{10-12}
    Task & & Valid & Test && Valid && Test && Valid && Test \\
    \midrule
    \texttt{Smiling} && 51.7 & 52.6 && \textbf{92.1 $\pm$ 0.2} && \textbf{90.9 $\pm$ 0.7} && 89.4 $\pm$ 0.1 && 87.2 $\pm$ 1.1 \\
    \texttt{Earrings} && 80.9 & 78.2 && \textbf{86.2 $\pm$ 0.8} && \textbf{88.2 $\pm$ 1.1} && 84.7 $\pm$ 0.1 && 85.2 $\pm$ 0.9 \\
    \bottomrule
\end{tabular}
}
    \end{center}
    \label{tbl:appendix-base-acc}
\end{table*}

\begin{table*}[hbt!]
    \caption{
		Empirical individual fairness of the base classifiers evaluated via sampling.
		These results come with no provable guarantees and serve as upper bounds
		of the certified individual fairness.
    }
    \begin{center}
    \resizebox{0.975\linewidth}{!}{\begin{tabular}{@{}lP{0.03cm}lP{0.03cm}cP{0.025cm}cP{0.05cm}cP{0.025cm}c@{}}
    \toprule
    & & & & \multicolumn{3}{c}{Emp. Fair (ResNet-18)} & & \multicolumn{3}{c}{Emp. Fair ($\glowpoint$)} \\
    \cmidrule{5-7}
    \cmidrule{9-11}
    Task & & Sensitive attribute(s) && Valid && Test && Valid && Test \\
    \midrule
    \multirow{6}{*}{\texttt{Smiling}} && \texttt{Pale\_Skin} && 74.1 $\pm$ 1.0 && 75.2 $\pm$ 1.1 && 75.8 $\pm$ 0.6 && 79.9 $\pm$ 1.2 \\
    && \texttt{Young} && 87.7 $\pm$ 0.5 && 90.1 $\pm$ 0.7 && 95.2 $\pm$ 0.6 && 96.8 $\pm$ 0.9 \\
    && \texttt{Blond\_Hair} && 89.1 $\pm$ 1.1 && 89.4 $\pm$ 1.5 && 81.9 $\pm$ 1.6 && 84.6 $\pm$ 2.9 \\
    && \texttt{Heavy\_Makeup} && 82.3 $\pm$ 1.0 && 82.5 $\pm$ 1.2 && 74.9 $\pm$ 1.6 && 78.2 $\pm$ 2.3 \\
    && \texttt{Pale+Young} && 71.5 $\pm$ 0.9 && 72.6 $\pm$ 1.1 && 75.8 $\pm$ 0.6 && 79.9 $\pm$ 1.2 \\
    && \texttt{Pale+Young+Blond} && 70.3 $\pm$ 0.5 && 70.6 $\pm$ 0.9 && 72.5 $\pm$ 0.6 && 76.9 $\pm$ 1.1 \\
    \midrule
    \multirow{4}{*}{\texttt{Earrings}} && \texttt{Pale\_Skin} && 92.8 $\pm$ 0.9 && 90.4 $\pm$ 1.2 && 91.5 $\pm$ 1.0 && 91.5 $\pm$ 1.6 \\
    && \texttt{Young} && 90.6 $\pm$ 1.8 && 87.7 $\pm$ 1.5 && 93.0 $\pm$ 1.2 && 94.7 $\pm$ 1.0 \\
    && \texttt{Blond\_Hair} && 89.7 $\pm$ 2.2 && 86.9 $\pm$ 2.5 && 88.3 $\pm$ 1.9 && 89.7 $\pm$ 2.2 \\
    && \texttt{Heavy\_Makeup} && 85.8 $\pm$ 3.4 && 82.2 $\pm$ 3.3 && 74.4 $\pm$ 4.4 && 73.3 $\pm$ 3.8 \\
    \bottomrule
\end{tabular}
}
    \end{center}
    \label{tbl:appendix-empirical-fair}
\end{table*}

\section{Hyperparameter Tuning}
\label{app:hyperparam-tuning}

In this section, we perform an extensive hyperparameter search in order to select
suitable values for the hyperparameters. We evaluate on 311 samples from the
\emph{validation} set of CelebA (again, every 64-th), on the \texttt{Smiling} task
with sensitive attributes \texttt{Pale\_Skin} and \texttt{Young}. Afterwards,
we reuse the same hyperparameter values for all tasks with very minor changes
(which we verify by running the experiments on the validation set first).
The tunable hyperparameters, as well as the range of values that we consider
about them, are as follows:
\begin{itemize}
	\item Adversarial loss weight:\\
	$\lambda_2 \in \{0, 0.001, 0.0025, 0.005, 0.01, 0.025, 0.05, 0.1, 0.25\}$
	\item Gaussian noise added during center smoothing of $\lassi$:\\
	$\CSsigma \in \{0.5, 0.55, 0.6, 0.65, 0.7, 0.75\}$
	\item Gaussian noise added during randomized
	smoothing of $\consumer$:\\
	$\RSsigma \in \{0.1, 0.25, 0.5, 1, 2.5, 5, 10, 25\}$
\end{itemize}

\medbreak

\noindent\textbf{Tuning $\CSsigma$ and the baselines} {}
We begin with selecting the value for $\CSsigma$. It is not used during
the training of $\lassi$ and $\consumer$, but is an integral part of the
center smoothing computation which is performed during inference and is the
most time-consuming component of the model pipeline. More concretely,
both $\CScenter = \smoothlassi\left(\glowpoint\right)$ and $\CSradius$ depend on
$\CSsigma$, in turn affecting both the accuracy and the certified
individual fairness. We evaluate the Naive model with all candidate
values for $\CSsigma$ and show the results
in~\cref{tbl:appendix-naive}. We observe very little variation
in accuracy, while the best certified individual fairness
and the smallest average center smoothing radii are
obtained at $\CSsigma = 0.6$ and $0.65$. While there is no significant
difference in performace between these two configurations, we expect that the
slightly larger value for $\CSsigma$ would generally produce
smaller center smoothing radii, leading to increased certified
fairness. Therefore, we set $\CSsigma = 0.65$ for all experiments
(except for FairFace, where we use $\epsilon = 0.5$ and
scale $\CSsigma$ correspondingly, \ie $\CSsigma = 0.325$).
Using the same $\CSsigma$ values for both the baselines and LASSI allows us to
attribute the improvements of the results to the additional training
mechanisms that we apply and not merely to different
hyperparameter values.

We perform a similar evaluation on the validation set of the other
baseline, DataAug, and from the
results in~\cref{tbl:appendix-naive,tbl:appendix-data-aug} we
set $\RSsigma = 10$ for both Naive and DataAug. Although
$\RSsigma = 5$ seems to work slightly better for \texttt{Young}, we
remark that \texttt{Young} is also the fairest of all
considered sensitive attributes, so we choose a more conservative value
that would be suitable for all of them.

\medskip

\noindent\textbf{Tuning $\lambda_2$} {}
Next, we incorporate the adversarial loss weight $\lambda_2$ to the training and
explore its impact on the model in~\cref{tbl:appendix-adversarial}.
The certified individual fairness increases with increasing
$\lambda_2$, until $\lambda_2 = 0.05$, and stays at
the same level afterwards. Interestingly, the accuracy is mostly unaffected.
We set $\lambda_2 = 0.05$ and $\RSsigma = 2.5$ for LASSI, as they give most of
the fairness boost obtained from adversarial
training, while keeping the accuracy high. Notably, the hyperparameter tuning
demonstrates that LASSI successfully enforces and certifies individual fairness
for a wide range of hyperparameter values and is not highly sensitive to them.

\begin{table*}[hbt!]
    \caption{
		Results of Naive on the validation subset of CelebA
		for different values of $\CSsigma$ and $\RSsigma$.
		The third column contains the mean center smoothing radii
		corresponding to the different $\CSsigma$ values. Smaller
		is generally better for certified individual fairness
		(see the condition in~\cref{alg:certify-fairness}).
    }
    \begin{center}
        \resizebox{\linewidth}{!}{\begin{tabular}{@{}lcccccP{0.8cm}P{0.8cm}P{0.8cm}P{0.8cm}P{0.8cm}P{0.8cm}P{0.8cm}P{0.8cm}@{}}
    \toprule
    & & & & & & \multicolumn{8}{c}{$\RSsigma$} \\
    \cmidrule{7-14}
    Sens. attribute && $\CSsigma$ && Mean $\CSradius$ & Metric & $0.1$ & $0.25$ & $0.5$ & $1$ & $2.5$ & $5$ & $10$ & $25$ \\
    \midrule
    \multirow{14}{*}{\texttt{Pale\_Skin}} && \multirow{2}{*}{0.5} && \multirow{2}{*}{42.25} & Acc & 87.8 & 87.8 & 87.5 & 88.4 & 89.1 & 88.7 & 88.7 & 84.6 \\
    &&&& & Fair & 0 & 0 & 0 & 0 & 0 & 0 & 0 & 0 \\
    \\[-1.25ex]
    && \multirow{2}{*}{0.55} && \multirow{2}{*}{34.19} & Acc & 87.8 & 87.8 & 87.8 & 88.4 & 88.7 & 88.7 & 88.4 & 84.6 \\
    &&&& & Fair & 0 & 0 & 0 & 0 & 0 & 0 & 0 & 0 \\
    \\[-1.25ex]
    && \multirow{2}{*}{0.6} && \multirow{2}{*}{\textbf{33.34}} & Acc & 87.8 & 87.5 & 87.8 & 88.4 & 88.7 & 88.7 & 88.7 & 84.6 \\
    &&&& & Fair & 0 & 0 & 0 & 0 & 0 & 0 & \textbf{1.0} & 0 \\
    \\[-1.25ex]
    && \multirow{2}{*}{0.65} && \multirow{2}{*}{33.37} & Acc & 87.5 & 87.5 & 87.8 & 88.4 & 88.8 & 88.7 & 88.4 & 84.6 \\
    &&&& & Fair & 0 & 0 & 0 & 0 & 0 & 0 & \textbf{1.0} & 0 \\
    \\[-1.25ex]
    && \multirow{2}{*}{0.7} && \multirow{2}{*}{33.72} & Acc & 87.5 & 87.5 & 87.5 & 88.4 & 88.4 & 88.7 & 88.1 & 84.6 \\
    &&&& & Fair & 0 & 0 & 0 & 0 & 0 & 0 & \textbf{1.0} & 0 \\
    \\[-1.25ex]
    && \multirow{2}{*}{0.75} && \multirow{2}{*}{34.18} & Acc & 87.8 & 88.1 & 88.1 & 88.4 & 88.7 & 89.1 & 88.1 & 84.6 \\
    &&&& & Fair & 0 & 0 & 0 & 0 & 0 & 0 & \textbf{1.0} & 0 \\
    \midrule
    \multirow{4}{*}{\texttt{Young}} && \multirow{2}{*}{0.6} && \multirow{2}{*}{\textbf{8.16}} & Acc & 88.1 & 88.1 & 87.8 & 87.8 & 88.7 & 88.7 & 88.1 & 85.2 \\
    &&&& & Fair & 0 & 0 & 0 & 5.1 & 36.3 & \textbf{58.8} & 58.5 & 39.9 \\
    \\[-1.25ex]
    && \multirow{2}{*}{0.65} && \multirow{2}{*}{\textbf{8.16}} & Acc & 88.1 & 88.1 & 87.8 & 87.8 & 88.7 & 88.7 & 88.1 & 84.9 \\
    &&&& & Fair & 0 & 0 & 0 & 4.8 & 36.3 & \textbf{58.8} & 58.2 & 39.5 \\
    \bottomrule
\end{tabular}
}
    \end{center}
	\label{tbl:appendix-naive}
\end{table*}

\begin{table*}[hbt!]
    \caption{
		Results of the DataAug baseline on the validation set
		of CelebA for $\CSsigma = 0.65$ and different values of
		$\RSsigma$.
    }
    \begin{center}
        \resizebox{\linewidth}{!}{\begin{tabular}{@{}lcccccP{0.8cm}P{0.8cm}P{0.8cm}P{0.8cm}P{0.8cm}P{0.8cm}P{0.8cm}P{0.8cm}@{}}
    \toprule
    & & & & & & \multicolumn{8}{c}{$\RSsigma$} \\
    \cmidrule{7-14}
    Sens. attribute && $\CSsigma$ && Mean $\CSradius$ & Metric & $0.1$ & $0.25$ & $0.5$ & $1$ & $2.5$ & $5$ & $10$ & $25$ \\
    \midrule
    \multirow{2}{*}{\texttt{Pale\_Skin}} && \multirow{2}{*}{0.65} && \multirow{2}{*}{14.52} & Acc & 87.5 & 87.5 & 87.8 & 87.8 & 88.7 & 89.4 & 88.7 & 84.9 \\
    &&&& & Fair & 0 & 0 & 0 & 0 & 0 & 28.3 & \textbf{31.5} & 10.0 \\
    \multirow{2}{*}{\texttt{Young}} && \multirow{2}{*}{0.65} && \multirow{2}{*}{7.09} & Acc & 87.5 & 87.8 & 87.8 & 89.1 & 88.7 & 88.7 & 88.7 & 84.9 \\
    &&&& & Fair & 0 & 0 & 0 & 1.6 & 46.6 & \textbf{65.6} & 65.0 & 48.9 \\
    \bottomrule
\end{tabular}
}
    \end{center}
	\label{tbl:appendix-data-aug}
\end{table*}

\begin{table*}[hbt!]
    \caption{
		Results of LASSI on the validation subset of CelebA
		for different values of $\lambda_2$ and $\RSsigma$, while
		keeping $\CSsigma = 0.65$. The certified individual fairness
		increases with increasing $\lambda_2$, until
		the $\lambda_2 = 0.05$ level.
    }
    \begin{center}
        \resizebox{0.975\linewidth}{!}{\begin{tabular}{@{}lccP{0.8cm}P{0.8cm}P{0.8cm}P{0.8cm}P{0.8cm}P{0.8cm}P{0.8cm}P{0.8cm}@{}}
    \toprule
    & & & \multicolumn{8}{c}{$\RSsigma$} \\
    \cmidrule{4-11}
    Sens. attribute & $\lambda_2$ & Metric & $0.1$ & $0.25$ & $0.5$ & $1$ & $2.5$ & $5$ & $10$ & $25$ \\
    \midrule
    \multirow{20}{*}{\texttt{Pale\_Skin}} & \multirow{2}{*}{$0.001$} & Acc & 86.5 & 86.8 & 87.1 & 87.5 & 89.1 & 89.4 & 88.1 & 84.9 \\
    & & Fair & 0 & 0 & 0 & 0 & 0 & \textbf{13.2} & 12.9 & 1.9 \\
    \\[-1ex]
    &\multirow{2}{*}{$0.0025$} & Acc & 87.8 & 88.1 & 88.4 & 88.7 & 90.4 & 89.1 & 86.5 & 83.3 \\
    && Fair & 0 & 0 & 0 & 0 & 15.4 & \textbf{27.3} & 24.8 & 5.5 \\
    \\[-1ex]
    &\multirow{2}{*}{$0.005$} & Acc & 87.8 & 87.8 & 88.1 & 87.8 & 89.7 & 89.1 & 87.5 & 82.0 \\
    && Fair & 0 & 0 & 0 & 0 & 35.0 & \textbf{40.5} & 37.0 & 15.4 \\
    \\[-1ex]
    &\multirow{2}{*}{$0.01$} & Acc & 88.1 & 88.1 & 87.8 & 88.1 & 89.4 & 90.0 & 87.5 & 82.0 \\
    && Fair & 0 & 0 & 0 & 9.3 & 46.0 & \textbf{49.5} & 47.6 & 27.7 \\
    \\[-1ex]
    &\multirow{2}{*}{$0.025$} & Acc & 88.4 & 88.1 & 88.1 & 88.4 & 89.1 & 89.7 & 87.5 & 82.3 \\
    && Fair & 0 & 1.9 & 9.6 & 49.2 & 64.3 & \textbf{66.2} & 64.0 & 47.9 \\
    \\[-1ex]
    &\multirow{2}{*}{$0.05$} & Acc & 87.8 & 87.8 & 88.1 & 88.1 & 89.7 & 89.4 & 86.8 & 83.0 \\
    && Fair & 45.0 & 97.1 & 97.7 & \textbf{98.1} & 96.1 & 96.1 & 95.5 & 93.6 \\
    \\[-1ex]
    &\multirow{2}{*}{$0.1$} & Acc & 86.5 & 86.5 & 86.5 & 86.8 & 86.5 & 85.9 & 83.3 & 76.8 \\
    && Fair & 57.9 & 93.6 & 93.9 & 94.5 & \textbf{96.8} & 96.1 & 94.9 & 88.1 \\
    \\[-1ex]
    &\multirow{2}{*}{$0.25$} & Acc & 87.1 & 87.1 & 87.1 & 87.5 & 87.5 & 85.9 & 79.4 & 67.8 \\
    && Fair & 96.8 & 96.1 & 96.1 & 96.1 & \textbf{98.1} & 97.4 & 93.2 & 79.7 \\
    \midrule
    \multirow{7}{*}{\texttt{Young}} & \multirow{2}{*}{$0.05$} & Acc & 89.1 & 89.1 & 88.1 & 89.4 & 89.4 & 89.1 & 88.7 & 84.6 \\
    & & Fair & 97.1 & 97.7 & 98.4 & \textbf{99.0} & \textbf{99.0} & 98.7 & 96.5 & 96.1 \\
    \\[-1ex]
    &\multirow{2}{*}{$0.1$} & Acc & 88.1 & 88.7 & 89.4 & 89.4 & 88.7 & 88.7 & 87.8 & 82.6 \\
    & & Fair & 58.5 & 94.9 & 94.9 & 96.8 & \textbf{97.1} & 95.8 & 96.1 & 92.3 \\
    \\[-1ex]
    &\multirow{2}{*}{$0.25$} & Acc & 88.4 & 88.4 & 88.4 & 88.7 & 88.4 & 88.1 & 86.8 & 77.8 \\
    & & Fair & 98.4 & 98.1 & 98.4 & \textbf{99.4} & \textbf{99.4} & 98.7 & 95.2 & 89.4 \\
    \bottomrule
\end{tabular}
}
    \end{center}
	\label{tbl:appendix-adversarial}
\end{table*}

\FloatBarrier

\subsubsection{Selected experiment hyperparameters}

Here, we summarize the hyperparameter values selected for the final experiments.
We use $\epsilon = 1$ for all similarity set definitions except the experiments with:
(i) the alternative attribute vectors
from~\cite{denton2019detecting,li2021discover}, where $\epsilon = 10$,
and (ii) FairFace, where $\epsilon = 0.5$. We maintain the $\epsilon/\CSsigma$ ratio,
which impacts center smoothing, setting
$\CSsigma = 0.65$ by default (as stated in the sections above) and using
$\CSsigma = 6.5$ and $0.325$ when $\epsilon = 10$ and $0.5$ respectively.
Our smoothing arguments are consistent with prior
work~\cite{cohen2019certified,kumar2021center}:
\begin{itemize}[noitemsep,nolistsep]
	\item Randomized smoothing~\cite{cohen2019certified}: $\RSprob = 0.001$,
	$N_{rs} = \num[group-separator={,}]{100000}$,
	$N_{0,rs} = \num[group-separator={,}]{2000}$.
	\item Center smoothing~\cite{kumar2021center}: $\CSprob = 0.01$,
	$N_{cs} = \num[group-separator={,}]{10000}$,
	$N_{0,cs} = \num[group-separator={,}]{10000}$.
\end{itemize}
The rest of the model hyperparameters are listed
in~\cref{tbl:appendix-selected-hyperparams}.
In the CelebA 64$\times$64 and 128$\times$128 setups, we run LASSI with
$\lambda_2 = 0.25$ for the (target=\texttt{Earrings}, sensitive=\texttt{Makeup})
pair of attributes because of the high correlation between them. We train
the representation $\lassi$ for 20 epochs in the transfer experiments
(CelebA, FairFace) and 5 epochs otherwise. The linear classifier $\consumer$ is
trained for 1 epoch. We generally set a lower value to $\RSsigma$ when the task
is more difficult and the downstream classifier is therefore less confident.
Overall, we remark that the hyperparameter values
are similar and within the same range for all models and experiments, meaning that
our approach does not require substantial fine-tuning.

\begin{table*}[hbt!]
	\caption{
		Hyperparameters used for the different model and experiment setups.
	}
	\begin{center}
		\resizebox{\linewidth}{!}{\begin{tabular}{@{}lll@{}}
    \toprule
    & Model / &  \\
    Dataset & Experiment(s) & Hyperparameters \\
    \midrule
    \multirow{4}{*}{CelebA} & 64$\times$64 and & $\lambda_1 = 1$;
    $\lambda_2 = 0$ (Naive, DataAug) and $0.05$ (LASSI);
    $\lambda_3 = 0$; \\
    & 128$\times$128 &
    $\RSsigma = 10$ (Naive, DataAug) and $2.5$ (LASSI); $s = 10$ (DataAug, LASSI). \\
    & & \\
    & Transfer & $\lambda_1 = 0$; $\lambda_2 = 0.05$; $\lambda_3 = 0.1$;
    $\RSsigma = 0.5$; $s = 10$.\\
    \midrule
    \multirow{3}{*}{FairFace} & Naive & $\lambda_1 = 1$; $\lambda_2 = \lambda_3 = 0$;
    $\RSsigma = 5$ (\texttt{Age-2}) and $0.1$
    (\texttt{Age-3}, \texttt{Age} (all)). \\
    & LASSI & $\lambda_1 = 1$; $\lambda_2 = 0.1$; $\lambda_3 = 0$;
    $\RSsigma = 0.25$; $s = 10$.\\
    & Transfer & $\lambda_1 \in \{0, 0.001, 0.01\}$; $\lambda_2 = \lambda_3 = 0.1$;
    $\RSsigma = 0.1$; $s = 10$. \\
    \midrule
    3D Shapes & Naive & $\lambda_1 = 1$; $\lambda_2 = \lambda_3 = 0$;
    $\RSsigma = 5$. \\
    (\cref{app:3dshapes})& LASSI & $\lambda_1 = 1$; $\lambda_2 = 0.1$; $\lambda_3 = 0$;
    $\RSsigma = 1$; $s = 100$. \\
    \bottomrule
\end{tabular}
}
	\end{center}
	\label{tbl:appendix-selected-hyperparams}
\end{table*}

\section{More Experimental Results on CelebA}
\label{app:more-experimental-results-stddev}

This section provides further details about the experiments on the CelebA dataset
with the standard attribute vector from~\cite{kingma2018glow},
$\attrvec = \generativepoint_{G, pos} - \generativepoint_{G, neg}$
(\cref{sec:similarity}).

\subsubsection{64$\times$64 images}

\cref{tbl:appendix-table1-acc-fair-stddev}
contains the means and the standard deviations of the accuracies and the certified
individual fairness of the CelebA 64$\times$64 experiments summarized
in~\cref{tbl:results}, averaged over 5 runs. The standard deviation of Naive
and DataAug's fairness is high, while LASSI consistently enforces certified
individual fairness with low variance.

\begin{table*}[hbt!]
	\caption{
		Means and standard deviations of the accuracies and
		the certified individual fairness reported
		in~\cref{tbl:results}, averaged over $5$ runs with different random seeds
		on the \texttt{Smiling} (rows 1-6) and \texttt{Earrings} (rows 7-10) tasks.
	}
	\begin{center}
		\resizebox{\linewidth}{!}{\begin{tabular}{@{}lcccccccccccccc@{}}
	\toprule
	&& \multicolumn{3}{c}{Naive} &&& \multicolumn{3}{c}{DataAug} &&& \multicolumn{3}{c}{LASSI (ours)} \\
	\cmidrule{3-5}
	\cmidrule{7-10}
	\cmidrule{13-15}
	Sens. attribs.: && Acc && Fair &&& Acc && Fair &&& Acc && Fair \\
	\midrule
	\texttt{Pale\_Skin}     && \textbf{86.3 $\pm$ 1.5} && 0.6 $\pm$ 0.5 &&& 85.7 $\pm$ 1.2 && 12.2 $\pm$ 14.7 &&& 85.9 $\pm$ 1.3 && \textbf{98.0 $\pm$ 0.5} \\
	\texttt{Young}              && \textbf{86.3 $\pm$ 1.8} && 38.2 $\pm$ 23.4 &&& 85.9 $\pm$ 1.6 && 43.0 $\pm$ 30.7 &&& \textbf{86.3 $\pm$ 1.3} && \textbf{98.8 $\pm$ 0.6} \\
	\texttt{Blond\_Hair}        && 86.3 $\pm$ 1.6 && 3.4 $\pm$ 3.1 &&& \textbf{86.6 $\pm$ 1.0} && 9.4 $\pm$ 10.0 &&& 86.4 $\pm$ 1.0 && \textbf{94.7 $\pm$ 1.5} \\
	\texttt{Heavy\_Makeup}      && \textbf{86.3 $\pm$ 1.1} && 0.4 $\pm$ 0.4 &&& 85.3 $\pm$ 1.7 && 13.7 $\pm$ 8.8 &&& 85.6 $\pm$ 1.6 && \textbf{91.3 $\pm$ 8.1} \\
	\texttt{P+Y}         && \textbf{86.0 $\pm$ 1.5} && 0.4 $\pm$ 0.4 &&& 85.8 $\pm$ 1.4 && 9.9 $\pm$ 12.7 &&& 85.8 $\pm$ 0.9 && \textbf{97.3 $\pm$ 0.9} \\
	\texttt{P+Y+B}   && 86.2 $\pm$ 1.7 && 0.0 $\pm$ 0.0 &&& \textbf{86.4 $\pm$ 1.0} && 3.6 $\pm$ 3.8 &&& 85.5 $\pm$ 0.4 && \textbf{86.5 $\pm$ 2.7} \\
	\midrule
	\texttt{Pale\_Skin}     && 81.3 $\pm$ 2.2 && 24.3 $\pm$ 35.6 &&& 81.0 $\pm$ 2.3 && 40.4 $\pm$ 32.6 &&& \textbf{85.0 $\pm$ 0.5} && \textbf{98.5 $\pm$ 0.9} \\
	\texttt{Young}              && 81.4 $\pm$ 2.2 && 59.2 $\pm$ 18.0 &&& 79.9 $\pm$ 1.4 && 72.0 $\pm$ 24.1 &&& \textbf{84.5 $\pm$ 1.0} && \textbf{98.0 $\pm$ 1.1} \\
	\texttt{Blond\_Hair}        && 81.4 $\pm$ 2.2 && 9.2 $\pm$ 17.5 &&& 82.2 $\pm$ 2.8 && 30.5 $\pm$ 40.9 &&& \textbf{84.8 $\pm$ 0.5} && \textbf{96.2 $\pm$ 2.6} \\
	\texttt{Heavy\_Makeup}      && 81.6 $\pm$ 1.9 && 20.5 $\pm$ 13.0 &&& 80.3 $\pm$ 1.9 && 49.2 $\pm$ 37.0 &&& \textbf{82.3 $\pm$ 0.6} && \textbf{98.7 $\pm$ 0.7} \\
	\bottomrule
\end{tabular}
}
	\end{center}
	\label{tbl:appendix-table1-acc-fair-stddev}
\end{table*}

\begin{table*}[hbt!]
	\caption{
		Empirical evaluation of the individual fairness of the models computed by
		comparing their predictions on the original test samples to
		the model predictions on the endpoints of the corresponding similarity sets.
	}
	\begin{center}
		\resizebox{0.925\linewidth}{!}{\begin{tabular}{@{}lP{0.03cm}lcP{2cm}cP{2cm}cP{2cm}@{}}
    \toprule
    Task & & Sensitive attribute(s) && Naive && DataAug && LASSI (ours) \\
    \midrule
    \multirow{6}{*}{\texttt{Smiling}} && \texttt{Pale\_Skin} & & 78.4 $\pm$ 2.1 && 90.1 $\pm$ 1.9 && \textbf{99.6 $\pm$ 0.2} \\
    & & \texttt{Young} & & 95.3 $\pm$ 0.4 && 96.7 $\pm$ 0.5 && \textbf{99.6 $\pm$ 0.2} \\
    & & \texttt{Blond\_Hair} & & 83.3 $\pm$ 0.7 && 93.9 $\pm$ 1.5 && \textbf{99.2 $\pm$ 0.4} \\
    & & \texttt{Heavy\_Makeup} & & 75.8 $\pm$ 2.4 && 88.3 $\pm$ 0.8 && \textbf{97.9 $\pm$ 1.6} \\
    & & \texttt{Pale+Young} & & 78.0 $\pm$ 2.0 && 89.0 $\pm$ 2.2 && \textbf{99.4 $\pm$ 0.5} \\
    & & \texttt{Pale+Young+Blond} & & 77.9 $\pm$ 2.1 && 87.4 $\pm$ 0.9 && \textbf{96.9 $\pm$ 0.7} \\
    \midrule
    \multirow{4}{*}{\texttt{Earrings}} && \texttt{Pale\_Skin} & & 97.1 $\pm$ 1.6 && 99.1 $\pm$ 0.7 && \textbf{99.5 $\pm$ 0.4} \\
    & & \texttt{Young} & & 98.5 $\pm$ 1.4 && \textbf{99.5 $\pm$ 0.5} && 99.2 $\pm$ 0.4 \\
    & & \texttt{Blond\_Hair} & & 96.7 $\pm$ 3.4 && 98.5 $\pm$ 0.4 && \textbf{99.1 $\pm$ 0.7} \\
    & & \texttt{Heavy\_Makeup} & & 92.2 $\pm$ 6.6 && 98.1 $\pm$ 1.1 && \textbf{99.7 $\pm$ 0.3} \\
    \bottomrule
\end{tabular}
}
	\end{center}
	\label{tbl:appendix-endpoints-fair}
\end{table*}

Moreover, in~\cref{tbl:appendix-endpoints-fair}
we check for what fraction of the test subset the models classify the similarity set
endpoints the same as the original data point. Note that this is again another
empirical estimate, serving as an upper bound of the certified individual fairness of
the models. Nevertheless, LASSI outperforms the baselines on that metric as well.
More importantly, out of all 150 combinations of models, tasks and sensitive attributes
(3 model types, 10 task-attribute pairs, 5 random seeds), in 8 combinations
there is only 1 test sample which we certify as individually fair but the endpoints
classifications mismatch. In all other combinations, no such situation
occurs, serving as another test for the correctness of our certificates. One test sample
out of 312 is 0.32\%, which is within our confidence of $1 - \CSprob - \RSprob = 98.9\%$.

\subsubsection{128$\times$128 images}

Keeping all hyperparameters the same, we evaluate LASSI on images of size
128$\times$128. The results in~\cref{tbl:appendix-128-acc-fair-stddev} indicate that
LASSI increases the certified individual fairness in this setting as well, while also
slightly improving the classification accuracy. We attribute this to the richer
and larger latent space of Glow, which is potentially more easily separable in this
case.

\begin{table*}[hbt!]
	\caption{
		Evaluation of LASSI on 128$\times$128-dimensional images, demonstrating that it
		significantly increases the certified individual fairness for larger images
		as well. Evaluated tasks:
		\texttt{Smiling} (rows 1-6) and \texttt{Earrings} (rows 7-10).
	}
	\begin{center}
		\resizebox{\linewidth}{!}{\begin{tabular}{@{}lcccccccccccccc@{}}
	\toprule
	&& \multicolumn{3}{c}{Naive} &&& \multicolumn{3}{c}{DataAug} &&& \multicolumn{3}{c}{LASSI (ours)} \\
	\cmidrule{3-5}
	\cmidrule{7-10}
	\cmidrule{13-15}
	Sens. attribs.: && Acc && Fair &&& Acc && Fair &&& Acc && Fair \\
	\midrule
	\texttt{Pale\_Skin}     && 88.8 $\pm$ 1.0 && 0.0 $\pm$ 0.0 &&& 89.6 $\pm$ 0.5 && 0.0 $\pm$ 0.0 &&& \textbf{90.0 $\pm$ 1.1} && \textbf{70.6 $\pm$ 14.2} \\
	\texttt{Young}              && 88.7 $\pm$ 0.7 && 46.0 $\pm$ 16.2 &&& 88.8 $\pm$ 1.0 && 47.6 $\pm$ 20.2 &&& \textbf{89.7 $\pm$ 0.7} && \textbf{97.2 $\pm$ 1.6} \\
	\texttt{Blond\_Hair}        && 88.8 $\pm$ 0.9 && 0.1 $\pm$ 0.1 &&& 89.4 $\pm$ 1.1 && 0.0 $\pm$ 0.0 &&& \textbf{90.1 $\pm$ 0.8} && \textbf{77.8 $\pm$ 10.2} \\
	\texttt{Heavy\_Makeup}      && 89.0 $\pm$ 0.9 && 2.5 $\pm$ 3.5 &&& 89.6 $\pm$ 1.1 && 30.4 $\pm$ 20.7 &&& \textbf{90.2 $\pm$ 0.3} && \textbf{87.6 $\pm$ 3.9} \\
	\texttt{P+Y}         && 88.8 $\pm$ 1.0 && 0.0 $\pm$ 0.0 &&& 89.4 $\pm$ 1.3 && 8.7 $\pm$ 16.5 &&& \textbf{90.2 $\pm$ 0.5} && \textbf{69.4 $\pm$ 9.7} \\
	\texttt{P+Y+B}   && 88.7 $\pm$ 0.8 && 0.0 $\pm$ 0.0 &&& 89.9 $\pm$ 1.5 && 4.4 $\pm$ 9.6 &&& \textbf{90.2 $\pm$ 0.7} && \textbf{72.7 $\pm$ 5.0} \\
	\midrule
	\texttt{Pale\_Skin}     && 80.1 $\pm$ 1.4 && 0.0 $\pm$ 0.0 &&& 80.1 $\pm$ 2.5 && 0.1 $\pm$ 0.1 &&& \textbf{84.4 $\pm$ 0.9} && \textbf{90.4 $\pm$ 2.5} \\
	\texttt{Young}              && 80.2 $\pm$ 1.4 && 73.5 $\pm$ 20.4 &&& 80.3 $\pm$ 1.5 && 78.2 $\pm$ 18.1 &&& \textbf{85.5 $\pm$ 1.4} && \textbf{96.4 $\pm$ 1.7} \\
	\texttt{Blond\_Hair}        && 80.2 $\pm$ 1.4 && 0.0 $\pm$ 0.0 &&& 80.6 $\pm$ 2.0 && 0.0 $\pm$ 0.0 &&& \textbf{83.9 $\pm$ 0.9} && \textbf{89.7 $\pm$ 4.0} \\
	\texttt{Heavy\_Makeup}      && 80.3 $\pm$ 1.4 && 42.1 $\pm$ 15.9 &&& 80.1 $\pm$ 1.9 && 65.1 $\pm$ 31.1 &&& \textbf{81.7 $\pm$ 1.3} && \textbf{98.3 $\pm$ 1.3} \\
	\bottomrule
\end{tabular}
}
	\end{center}
	\label{tbl:appendix-128-acc-fair-stddev}
\end{table*}

\subsubsection{Transfer learning}

\cref{tbl:appendix-base-acc-transfer} contains the base standard accuracies on the
transfer tasks. \cref{tbl:appendix-transfer-stddevs} reports the means and
the standard deviations of LASSI on the \texttt{Smiling} task when solved
in a transfer learning setting.

\begin{table*}[hbt!]
	\caption{
		Baseline accuracies on the transfer CelebA
		tasks. As before, the ResNet-18 classifier takes as an input
		the original images, while the $\glowpoint$ classifier is a fully
		connected network classifying their Glow latent representations.
		Neither of these classifiers involves representation learning.
	}
	\begin{center}
		\resizebox{0.9\linewidth}{!}{\begin{tabular}{@{}lP{0.03cm}ccP{0.05cm}cP{0.025cm}cP{0.05cm}cP{0.025cm}c@{}}
    \toprule
    & & \multicolumn{2}{c}{Majority class} & & \multicolumn{3}{c}{Acc (ResNet-18)} & & \multicolumn{3}{c}{Acc ($\glowpoint$)} \\
    \cmidrule{3-4}
    \cmidrule{6-8}
    \cmidrule{10-12}
    Task & & Valid & Test && Valid && Test && Valid && Test \\
    \midrule
    \texttt{Smiling} && 51.7 & 52.6 && \textbf{92.1 $\pm$ 0.2} && \textbf{90.9 $\pm$ 0.7} && 89.4 $\pm$ 0.1 && 87.2 $\pm$ 1.1 \\
    \texttt{High\_Cheeks} && 55.1 & 51.9 && \textbf{87.2 $\pm$ 0.2} && \textbf{86.8 $\pm$ 0.4} && 84.3 $\pm$ 0.1 && 83.8 $\pm$ 0.7 \\
    \texttt{Mouth\_Open} && 51.8 & 53.8 && \textbf{92.7 $\pm$ 0.3} && \textbf{92.9 $\pm$ 0.7} && 88.1 $\pm$ 0.2 && 89.6 $\pm$ 1.1 \\
    \texttt{Lipstick} && 55.4 & 54.8 && \textbf{91.5 $\pm$ 0.2} && 90.5 $\pm$ 0.8 && 89.2 $\pm$ 0.1 && \textbf{90.6 $\pm$ 1.1} \\
    \texttt{Heavy\_Makeup} && 61.0 & 58.7 && \textbf{90.2 $\pm$ 0.4} && \textbf{89.9 $\pm$ 0.4} && 87.8 $\pm$ 0.1 && 88.6 $\pm$ 1.1 \\
    \texttt{Wavy\_Hair} && 72.3 & 65.1 && \textbf{82.7 $\pm$ 1.8} && 76.3 $\pm$ 3.3 && 80.9 $\pm$ 0.5 && \textbf{81.7 $\pm$ 0.4} \\
    \texttt{Eyebrows} && 74.2 & 71.8 && \textbf{83.5 $\pm$ 0.5} && \textbf{81.1 $\pm$ 0.6} && 80.1 $\pm$ 0.1 && 79.4 $\pm$ 1.6 \\
    \bottomrule
\end{tabular}
}
	\end{center}
	\label{tbl:appendix-base-acc-transfer}
\end{table*}

\begin{table*}[hbt!]
	\caption{
		Mean and standard deviation of the accuracies and the certified individual
		fairness of LASSI on \texttt{Smiling} in a transfer
		learning setting (\cref{tbl:transfer}).
	}
	\begin{center}
		\resizebox{0.75\linewidth}{!}{\begin{tabular}{@{}lP{0.03cm}lcP{2cm}cP{2cm}@{}}
    \toprule
    Task & & Sensitive attribute(s) && Acc && Fair \\
    \midrule
    \multirow{5}{*}{\texttt{Smiling}} && \texttt{Pale\_Skin} & & 86.2 $\pm$ 1.1 && 93.1 $\pm$ 2.4 \\
    & & \texttt{Young} & & 86.0 $\pm$ 1.2 && 95.4 $\pm$ 1.0 \\
    & & \texttt{Blond\_Hair} & & 85.1 $\pm$ 1.6 && 93.8 $\pm$ 1.8 \\
    & & \texttt{Pale+Young} & & 85.9 $\pm$ 0.3 && 92.2 $\pm$ 0.7 \\
    & & \texttt{Pale+Young+Blond} & & 85.1 $\pm$ 0.7 && 87.0 $\pm$ 2.3 \\
    \bottomrule
\end{tabular}
}
	\end{center}
	\label{tbl:appendix-transfer-stddevs}
\end{table*}

\FloatBarrier

\section{Different Attribute Vector Types}
\label{app:more-attrib-vectors}

In this section, we demonstrate that LASSI is independent of the actual
computation of the attribute vector $\attrvec$ and that it can improve
the individual fairness for various attribute vector types.

\subsubsection{Denton et al.~\cite{denton2019detecting}}

First, in~\cref{tbl:appendix-denton-acc-fair-stddev} we report the means and
the standard deviations of the accuracies and the certified individual fairness
from~\cref{tbl:denton-attrib-vec}. The attribute vector $\attrvec$ used here is
orthogonal to the decision boundary of the linear classifier
$\sign(\attrvec^{\top}\glowpoint + b)$~\cite{denton2019detecting}
(\cref{sec:similarity}), with its length set to $\epsilon = 10$.

\begin{table*}[hbt!]
	\caption{
		Means and standard deviations of the accuracies and
		the certified individual fairness reported
		in~\cref{tbl:denton-attrib-vec},
		averaged over $5$ runs with different random seeds
		on the \texttt{Smiling} task.
	}
	\begin{center}
		\resizebox{\linewidth}{!}{\begin{tabular}{@{}lcccccccccccccc@{}}
	\toprule
	&& \multicolumn{3}{c}{Naive} &&& \multicolumn{3}{c}{DataAug} &&& \multicolumn{3}{c}{LASSI (ours)} \\
	\cmidrule{3-5}
	\cmidrule{7-10}
	\cmidrule{13-15}
	Sens. attribs.: && Acc && Fair &&& Acc && Fair &&& Acc && Fair \\
	\midrule
	\texttt{Pale\_Skin}     && 86.4 $\pm$ 1.7 && 34.0 $\pm$ 5.4 &&& 85.9 $\pm$ 1.5 && 90.3 $\pm$ 3.9 &&& \textbf{86.5 $\pm$ 1.3} && \textbf{98.8 $\pm$ 1.2} \\
	\texttt{Young}              && 86.3 $\pm$ 1.8 && 73.1 $\pm$ 3.5 &&& 86.2 $\pm$ 1.5 && 90.3 $\pm$ 3.3 &&& \textbf{86.8 $\pm$ 1.0} && \textbf{97.9 $\pm$ 1.2} \\
	\texttt{Blond\_Hair}        && 86.2 $\pm$ 1.8 && 71.4 $\pm$ 4.0 &&& 86.1 $\pm$ 1.8 && 88.8 $\pm$ 2.7 &&& \textbf{86.7 $\pm$ 1.4} && \textbf{98.8 $\pm$ 0.7} \\
	\texttt{Heavy\_Makeup}      && 86.2 $\pm$ 1.6 && 11.5 $\pm$ 2.5 &&& 86.3 $\pm$ 1.1 && 87.4 $\pm$ 1.6 &&& \textbf{86.8 $\pm$ 1.0} && \textbf{98.8 $\pm$ 0.9} \\
	\texttt{P+Y}         && 86.2 $\pm$ 1.8 && 28.6 $\pm$ 3.4 &&& 85.8 $\pm$ 1.5 && 84.7 $\pm$ 4.1 &&& \textbf{86.5 $\pm$ 1.2} && \textbf{98.6 $\pm$ 1.8} \\
	\texttt{P+Y+B}   && 86.2 $\pm$ 1.7 && 23.7 $\pm$ 2.1 &&& 85.9 $\pm$ 1.8 && 82.2 $\pm$ 5.2 &&& \textbf{86.4 $\pm$ 1.1} && \textbf{98.7 $\pm$ 0.5} \\
	\bottomrule
\end{tabular}
}
	\end{center}
	\label{tbl:appendix-denton-acc-fair-stddev}
\end{table*}

\subsubsection{Ramaswamy et al.~\cite{ramaswamy2021fair}}

Next, we adapt the attribute vector computation proposed by~\cite{ramaswamy2021fair}
by computing sample-specific vectors
$\attrvec_i = \generativepoint_{G, i} - \generativepoint'_{G, i}$ for every
$\inputpoint_i$ from the training set, where
$\generativepoint_{G, i} = \glowenc(\inputpoint_i)$ and $\generativepoint'_{G, i}$
is as defined in~\cite[Eq. (3)]{ramaswamy2021fair}. All sample-specific
$\attrvec_i$'s share the same direction, so we can average them to obtain the global
attribute vector $\attrvec = \frac{1}{N}\sum_{i=1}^N \attrvec_i$ and set
$\epsilon = 1$.

\subsubsection{Li and Xu~\cite{li2021discover}}

Finally,~\cite{li2021discover} discover biased attributes of pre-trained classifiers.
To that end, we train a ResNet-18 on the \texttt{Smiling} task. Then, we
run~\cite{li2021discover}'s optimization procedure to iteratively find 3 biased
attribute vectors (each orthogonal to the target and to the other attribute vectors)
for that model using Glow as the generative model. We use $\epsilon = 10$ for these
vectors.

\bigskip

\noindent \cref{tab:appendix-other-attribute-vecs} shows that LASSI significantly
improves the certified individual fairness while maintaining the same high
accuracy level for~\cite{ramaswamy2021fair} and~\cite{li2021discover}, as with the
attribute vectors from~\cite{denton2019detecting,kingma2018glow}, when evaluated on
the \texttt{Smiling} task.

\begin{table*}[hbt!]
	\caption{
		Evaluation of LASSI on CelebA using sensitive attribute vectors
		from~\cite{li2021discover,ramaswamy2021fair}.
		We denote~\cite{li2021discover}'s vectors as $\attrvec_0$, $\attrvec_1$, and
		$\attrvec_2$ since they are not necessarily associated with a
		sensitive attribute
		(unlike~\cite{denton2019detecting,kingma2018glow,ramaswamy2021fair}).
		As for the vectors from~\cite{denton2019detecting,kingma2018glow}
		(\cref{tbl:results,tbl:denton-attrib-vec}), LASSI significantly
		increases certified fairness without affecting the accuracy.
	}
	\label{tab:appendix-other-attribute-vecs}
	\centering
	\resizebox{\textwidth}{!}{\begin{tabular}{@{}llcccccccccccccc@{}}
    \toprule
    & && \multicolumn{3}{c}{Naive} &&& \multicolumn{3}{c}{DataAug} &&& \multicolumn{3}{c}{LASSI (ours)} \\
    \cmidrule{4-6}
    \cmidrule{8-11}
    \cmidrule{14-16}
    $\attrvec$ & Sens. attribs.: && Acc && Fair &&& Acc && Fair &&& Acc && Fair \\
    \midrule
    \multirow{6}{*}{\cite{ramaswamy2021fair}} & \texttt{Pale\_Skin}     && 86.3 $\pm$ 1.8 && 89.0 $\pm$ 3.9 &&& 86.0 $\pm$ 1.5 && 92.4 $\pm$ 2.6 &&& \textbf{86.8 $\pm$ 1.2} && \textbf{98.6 $\pm$ 1.0} \\
    & \texttt{Young}              && 86.3 $\pm$ 1.8 && 95.1 $\pm$ 1.5 &&& 86.2 $\pm$ 1.6 && 95.6 $\pm$ 1.8 &&& \textbf{86.9 $\pm$ 1.2} && \textbf{99.5 $\pm$ 0.5} \\
    & \texttt{Blond\_Hair}        && 86.2 $\pm$ 1.8 && 90.8 $\pm$ 3.5 &&& 86.2 $\pm$ 1.6 && 89.7 $\pm$ 3.0 &&& \textbf{86.8 $\pm$ 1.1} && \textbf{98.8 $\pm$ 0.3} \\
    & \texttt{Heavy\_Makeup}      && 86.3 $\pm$ 1.8 && 92.8 $\pm$ 1.4 &&& 86.0 $\pm$ 1.6 && 94.4 $\pm$ 1.4 &&& \textbf{86.7 $\pm$ 1.2} && \textbf{99.4 $\pm$ 0.3} \\
    & \texttt{P+Y}         && 86.3 $\pm$ 1.8 && 88.0 $\pm$ 3.9 &&& 86.2 $\pm$ 1.9 && 91.5 $\pm$ 4.1 &&& \textbf{86.7 $\pm$ 1.1} && \textbf{98.8 $\pm$ 0.9} \\
    & \texttt{P+Y+B}   && 86.3 $\pm$ 1.8 && 85.6 $\pm$ 4.3 &&& 86.5 $\pm$ 1.5 && 88.7 $\pm$ 5.4 &&& \textbf{86.7 $\pm$ 1.3} && \textbf{98.4 $\pm$ 0.9} \\
    \midrule
    \multirow{3}{*}{\cite{li2021discover}} & $\attrvec_0$            && 86.2 $\pm$ 1.8 && 92.3 $\pm$ 2.1 &&& 86.3 $\pm$ 1.6 && 94.8 $\pm$ 3.7 &&& \textbf{86.9 $\pm$ 1.4} && \textbf{99.3 $\pm$ 0.9} \\
    & $\attrvec_0$+$\attrvec_1$                   && 86.3 $\pm$ 1.8	&& 90.7 $\pm$ 2.7 &&& 86.4 $\pm$ 1.5 && 93.4 $\pm$ 1.2 &&& \textbf{86.9 $\pm$ 1.1} && \textbf{98.3 $\pm$ 1.3} \\
    & $\attrvec_0$+$\attrvec_1$+$\attrvec_2$    && 86.3 $\pm$ 1.8 && 90.1 $\pm$ 2.8 &&& 86.3 $\pm$ 1.7 && 92.4 $\pm$ 1.6 &&& \textbf{86.8 $\pm$ 1.0} && \textbf{98.5 $\pm$ 0.6} \\
    \bottomrule
\end{tabular}
}
\end{table*}

\section{Certification with Ground Truth Data}
\label{app:3dshapes}

An essential part of the evaluation is demonstrating that the fairness
certificates obtained using the generative model can transfer to ground truth
data.
However, CelebA does not contain images of the same individual with different
attributes, \eg the same individual with different skin colors. Thus,
we experiment with the 3D
Shapes dataset (Apache-2.0 license)~\cite{burgess2018shapes}, which provides images
of 3D shapes that are procedurally generated from 6 independent latent
factors: \texttt{floor hue}, \texttt{wall hue}, \texttt{object hue},
\texttt{scale}, \texttt{shape}, and \texttt{orientation}.
Therefore, we can obtain ground truth
images of the same object with varying latent factors.
The 3D Shapes dataset is typically used to investigate disentanglement
properties of unsupervised learning methods, \eg in the context of
fairness~\cite{locatello2019fairness}.

The goal is to show that the similarity set computed by Glow captures
a given latent factor (as in~\cref{fig:shapes_appendix}) and that certification with
respect to this set will result in certification of the ground truth.
To that end, we experiment with \texttt{orientation} as the
continuous sensitive attribute. It has $v = 15$ possible
values, the most among all latent factors, providing for the most rigorous evaluation.
The target attribute is set to \texttt{object hue}, which has $10$ different classes.

We filter the original training set to
create a biased one, correlating \texttt{orienta-} \texttt{tion} and \texttt{object hue}.
We only keep those samples in the training set for which: (i) $\texttt{hue} \leq 5$ and
$\texttt{orient} \leq 7$, or (ii) $\texttt{hue} \geq 6$ and
$\texttt{orient} \geq 9$. We extend the attribute vector
computation from~\cref{sec:similarity}~\cite{kingma2018glow}
(performed on the original, unfiltered training set)
to non-binary attributes, defining $\attrvec_{ij} =
\generativepoint_{G, i} - \generativepoint_{G, j}$, where $1 \leq i, j \leq v$ are
sensitive attribute values. Based on the construction of the biased training set,
we let the similarity set $\simset$ to be defined by all attribute vectors
$\{\attrvec_{ij}\}$ for which $i < 8 < j$
($7 \cdot 7 = 49$ vectors) and set $\epsilon = 1$. We train Naive
($\lambda_1 = 1$; $\lambda_2 = \lambda_3 = 0$; $\RSsigma = 5$) and LASSI
($\lambda_1 = 1$; $\lambda_2 = 0.1$; $\lambda_3 = 0$;
$\RSsigma = 1$; $s = 100$) models and report results on 300 samples from the
test set. When running LASSI on 3D Shapes, we sample more points
($s = 100$) compared to the other datasets in order to accommodate for the more
complex similarity set, defined by many more attribute vectors.

In the evaluation, apart from reporting the accuracy and the certified fairness
(CertFair) on the (unbiased) test subset, for each sample we also obtain the $v$
similar ground truth data points, \ie the same shape at $v$ different orientations,
while fixing all other factors.
The empirical unfairness (EmpUnfair) in this case is the percentage of test samples
for which the downstream classifier does not classify all $v$ ground truth
individually similar images the same.
Moreover, if any of the $v$ similar data points is
\emph{certified}, we check whether \emph{all} $v$ similar ground truth data points obtain
the same classification, indicating ground truth fairness.

\cref{tbl:shapes} shows that
LASSI substantially increases the accuracy and the certified individual fairness
(\wrt the similarity set computed using Glow),
while being nearly 100\% empirically fair on the ground truth images.
That is, in 0.3\%
of the test samples there were different classification outcomes among their $v$
similar (ground-truth) samples. Crucially, in all of these cases, our method
did not certify individual fairness for \emph{any} of
the $v$ similar data points, showing that the certificates transfer to
the ground truth.

\begin{table}
	\caption{
		Evaluation on 3D Shapes for the task \texttt{object hue}.
		The certification rate (CertFair) and the percentage of ground
		truth empirically unfair data points (EmpFair) sum up below 100\%.
	}
	\begin{center}
		\resizebox{\linewidth}{!}{\begin{tabular}{@{}lcccccccccccccc@{}}
    \toprule
    Method: &&& \multicolumn{5}{c}{Naive} &&& \multicolumn{5}{c}{LASSI (ours)} \\
    \cmidrule{4-8}
    \cmidrule{11-15}
    Sens. attrib. &&& Acc && CertFair && EmpUnfair ($\downarrow$) &&& Acc && CertFair && EmpUnfair ($\downarrow$) \\
    \midrule
    \texttt{orientation} &&& 32.0 && 0 && 69.3 &&& \textbf{100} && \textbf{81.3} && \textbf{0.3} \\
    \bottomrule
\end{tabular}
}
	\end{center}
	\label{tbl:shapes}
\end{table}

\section{More Examples of Similar Individuals}
\label{app:glow-reconstructions}

Here, we provide further samples from the similarity sets $\simseti$ (defined with
$\attrvec = \generativepoint_{G, pos} - \generativepoint_{G, neg}$), as
reconstructed by Glow, for various inputs $\inputpoint$ randomly drawn from our
evaluation subsets. A summary of all configurations
is listed in~\cref{tbl:list-reconstructions}. The images in the middle of the CelebA
and FairFace reconstructions correspond to the original inputs. The perturbations
range uniformly between $[-\frac{\epsilon}{\sqrt{n}}, \frac{\epsilon}{\sqrt{n}}]$,
where $n$ is the number of sensitive attributes. For $n > 1$, all
attribute vectors are multiplied by the same $t$ before adding them to the latent
representation of the original inputs. $\epsilon=1$ for CelebA and 3D Shapes
and $\epsilon=0.5$ for FairFace.

\begin{table}[tbh!]
    \caption{
		Example image reconstructions from the similarity sets in this work.
	}
	\begin{center}
		\resizebox{0.55\linewidth}{!}{\begin{tabular}{@{}lp{0.1cm}lp{0.1cm}l@{}}
    \toprule
    Dataset && Sensitive attribute(s) && Figure \\
    \midrule
    \multirow{5}{*}{CelebA} && \texttt{Pale\_Skin} && \cref{fig:appendix-pale-skin-0} \\
    && \texttt{Young} && \cref{fig:appendix-young-0} \\
    && \texttt{Blond\_Hair} && \cref{fig:appendix-blond-hair-0} \\
    && \texttt{Heavy\_Makeup} && \cref{fig:appendix-heavy-makeup-0} \\
    && \texttt{Pale} \texttt{+} \texttt{Young} && \cref{fig:appendix-pale-skin-young-0} \\
    && \texttt{Pale} \texttt{+} \texttt{Young} \texttt{+} \texttt{Blond} && \cref{fig:appendix-pale-young-blond-0} \\
    \midrule
    FairFace && \texttt{Race=Black} && \cref{fig:appendix-fairface-0} \\
    \midrule
    3D Shapes && \texttt{orientation} && \cref{fig:shapes_appendix} \\
    \bottomrule
\end{tabular}
}
	\end{center}
	\label{tbl:list-reconstructions}
\end{table}

\begin{figure*}[hbt!]
	\centering
	\includegraphics[width=\linewidth]{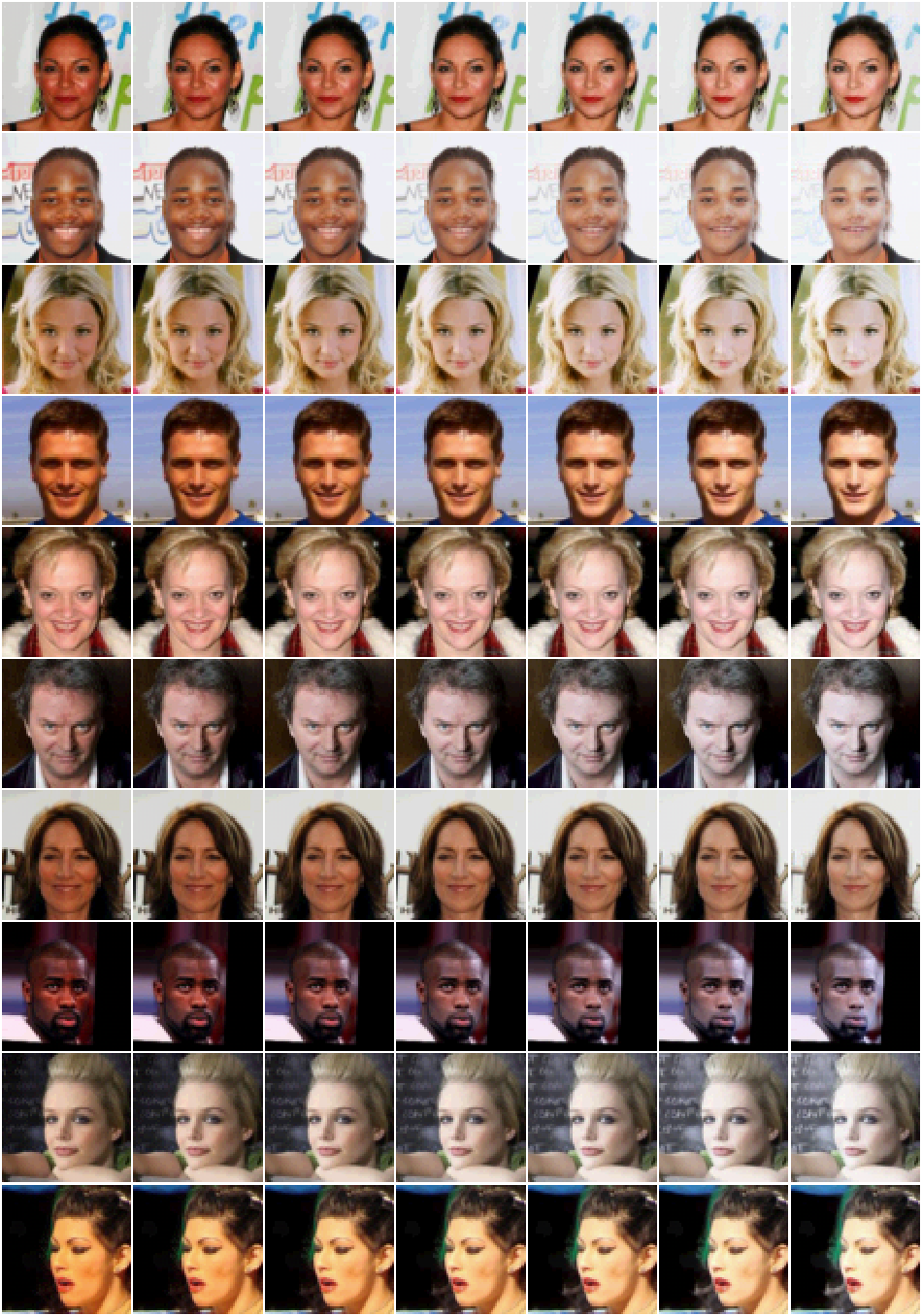}
	\caption{
		Similar individuals from $\simseti$, for $\inputpoint$ in the CelebA dataset,
		obtained by varying the sensitive attribute \texttt{Pale\_Skin}.
	}
	\label{fig:appendix-pale-skin-0}
\end{figure*}

\begin{figure*}[hbt!]
	\centering
	\includegraphics[width=\linewidth]{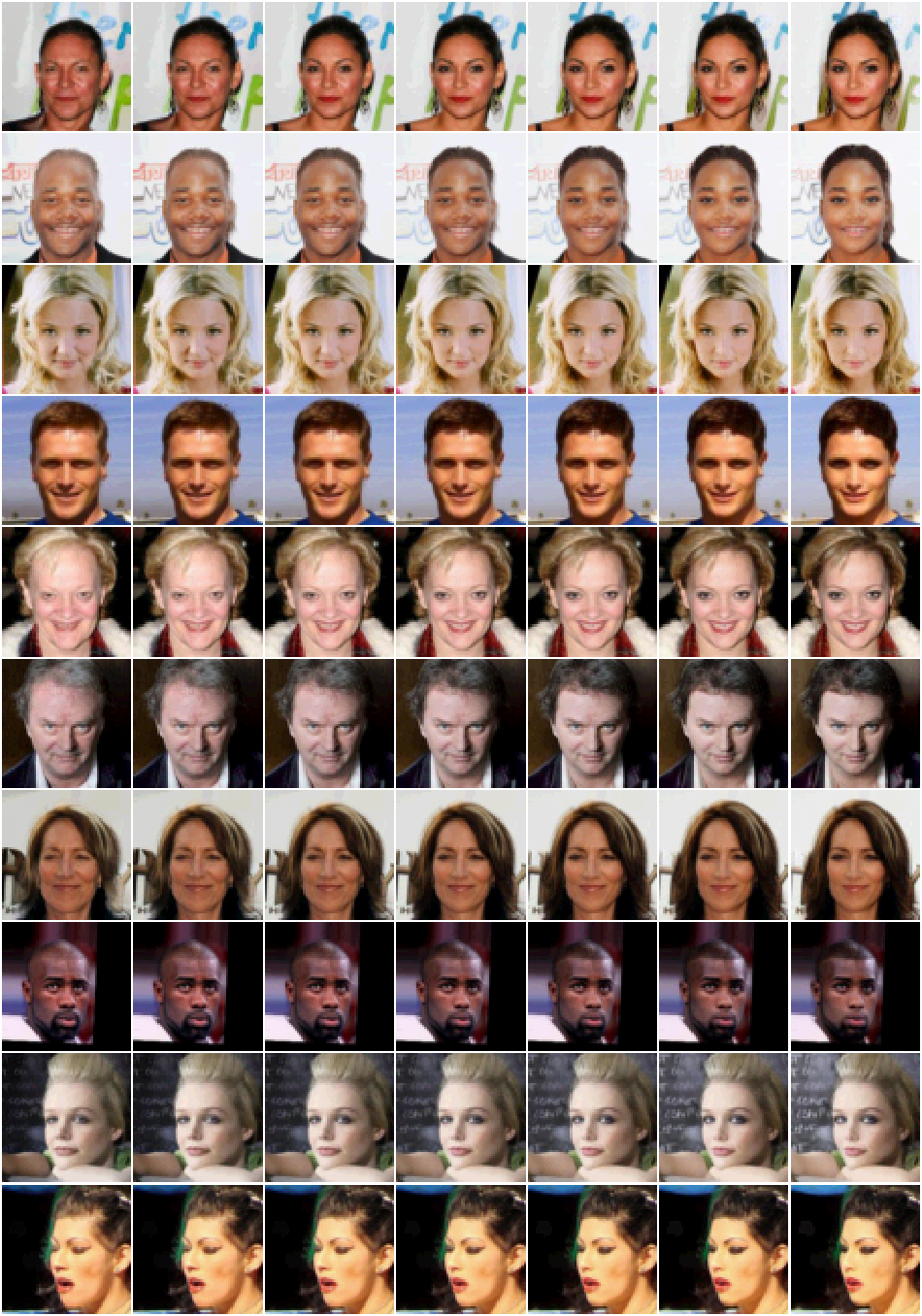}
	\caption{
		Similar individuals from $\simseti$, for $\inputpoint$ in the CelebA dataset,
		obtained by varying the sensitive attribute \texttt{Young}.
	}
	\label{fig:appendix-young-0}
\end{figure*}

\begin{figure*}[hbt!]
	\centering
	\includegraphics[width=\linewidth]{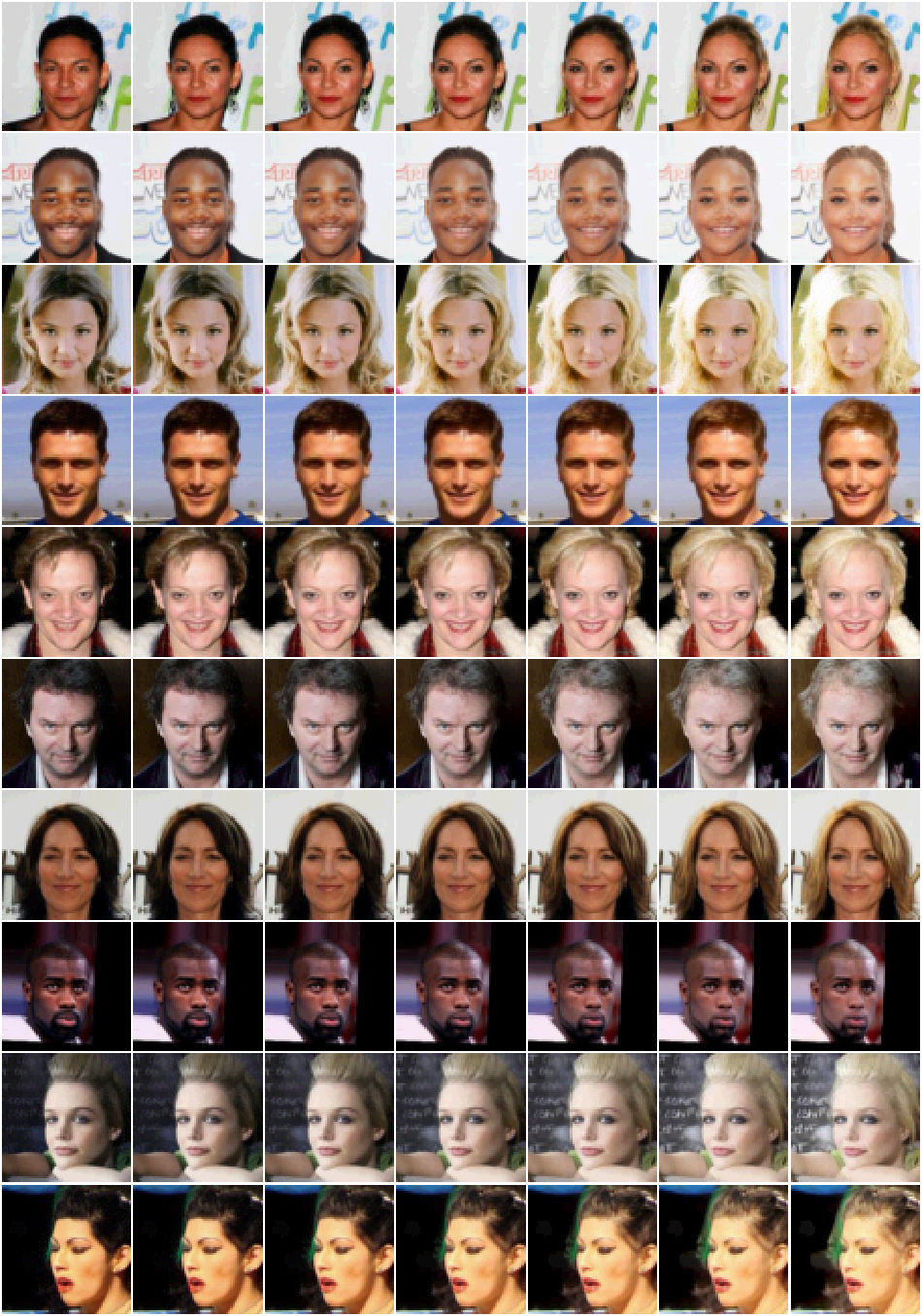}
	\caption{
		Similar individuals from $\simseti$, for $\inputpoint$ in the CelebA dataset,
		obtained by varying the sensitive attribute \texttt{Blond\_Hair}.
	}
	\label{fig:appendix-blond-hair-0}
\end{figure*}

\begin{figure*}[hbt!]
	\centering
	\includegraphics[width=\linewidth]{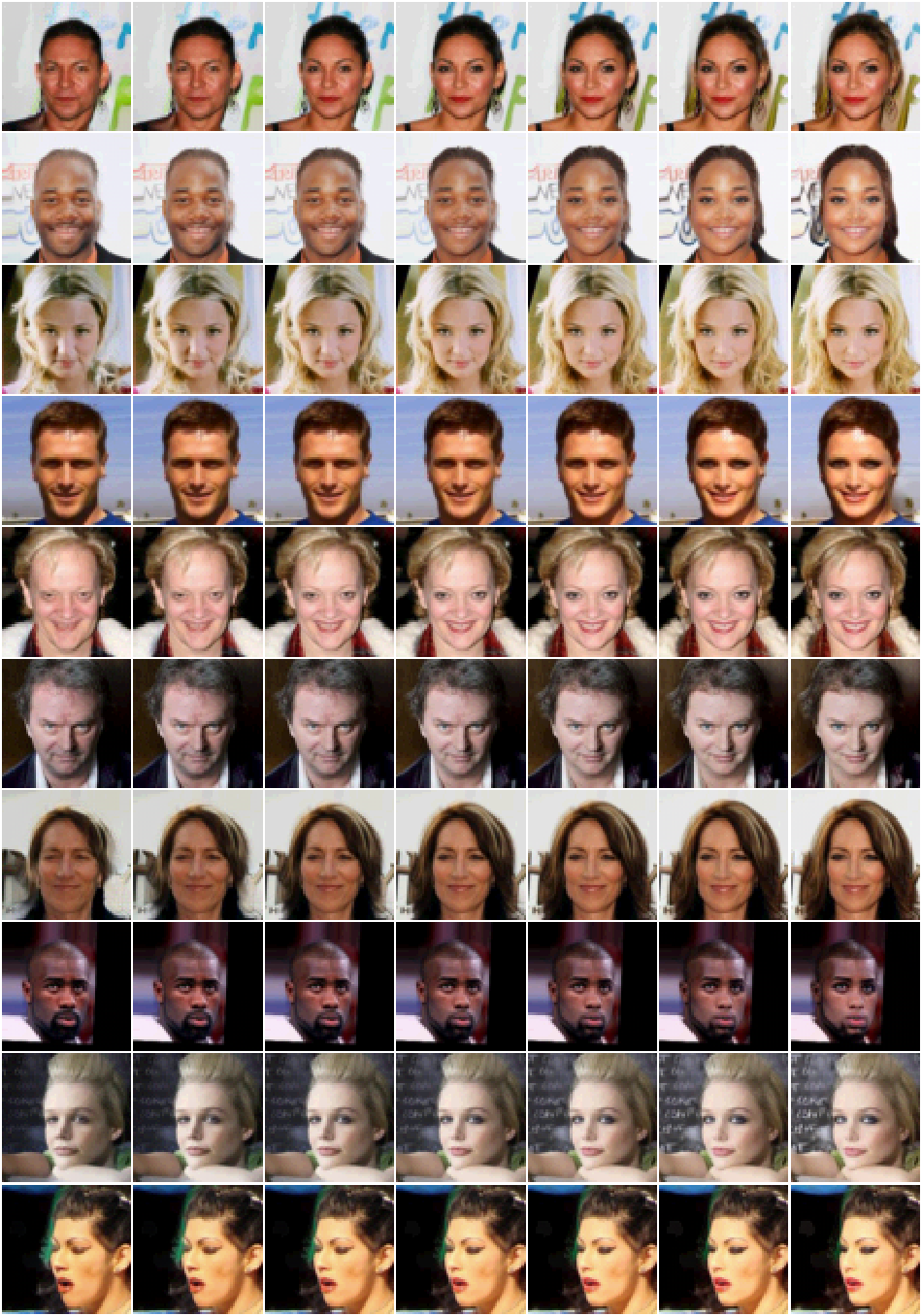}
	\caption{
		Similar individuals from $\simseti$, for $\inputpoint$ in the CelebA dataset,
		obtained by varying the sensitive attribute \texttt{Heavy\_Makeup}.
	}
	\label{fig:appendix-heavy-makeup-0}
\end{figure*}

\begin{figure*}[hbt!]
	\centering
	\includegraphics[width=\linewidth]{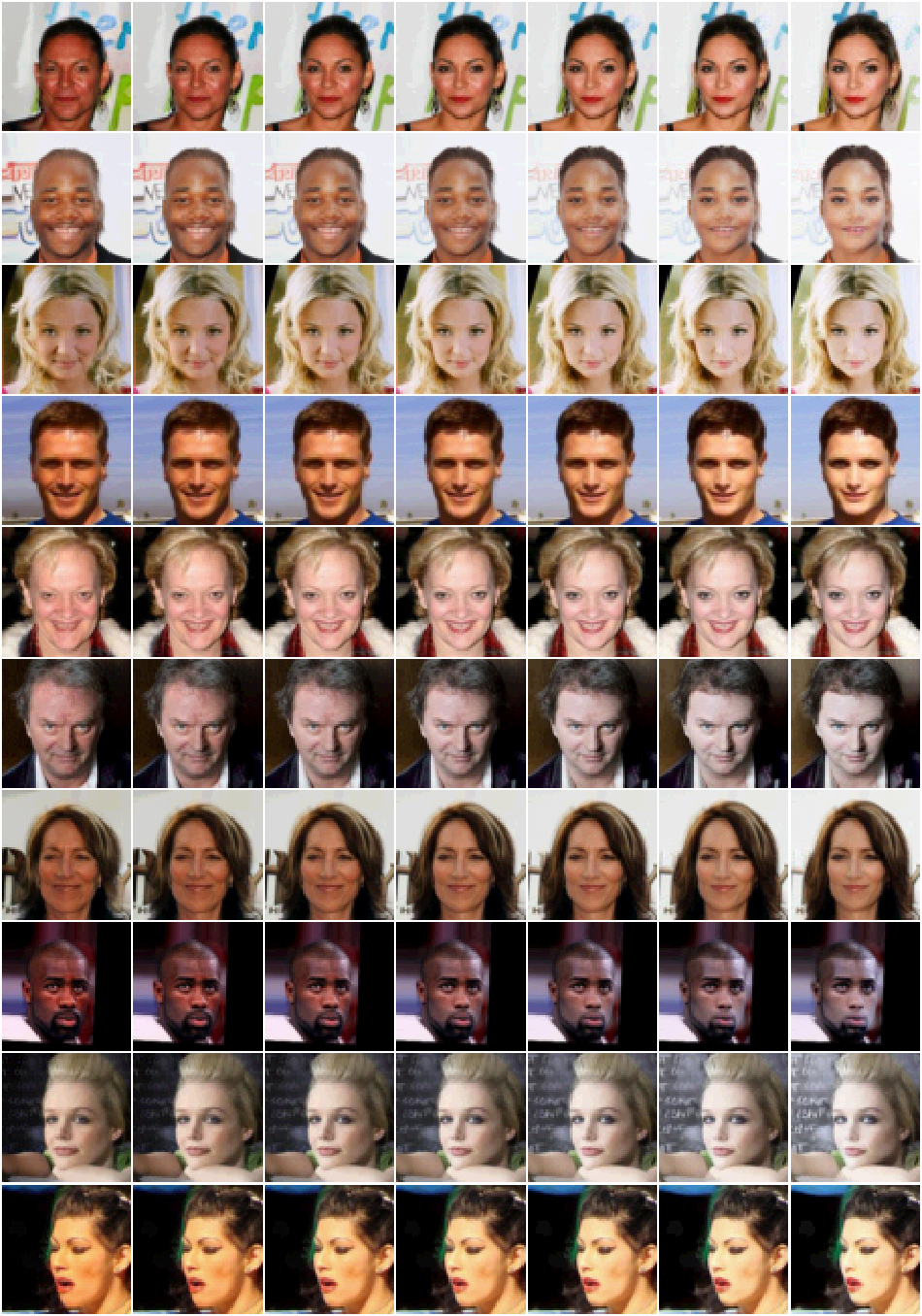}
	\caption{
		Similar individuals from $\simseti$ obtained by simultaneously varying the
		sensitive attributes \texttt{Pale\_Skin} \texttt{+} \texttt{Young}.
	}
	\label{fig:appendix-pale-skin-young-0}
\end{figure*}

\begin{figure*}[hbt!]
	\centering
	\includegraphics[width=\linewidth]{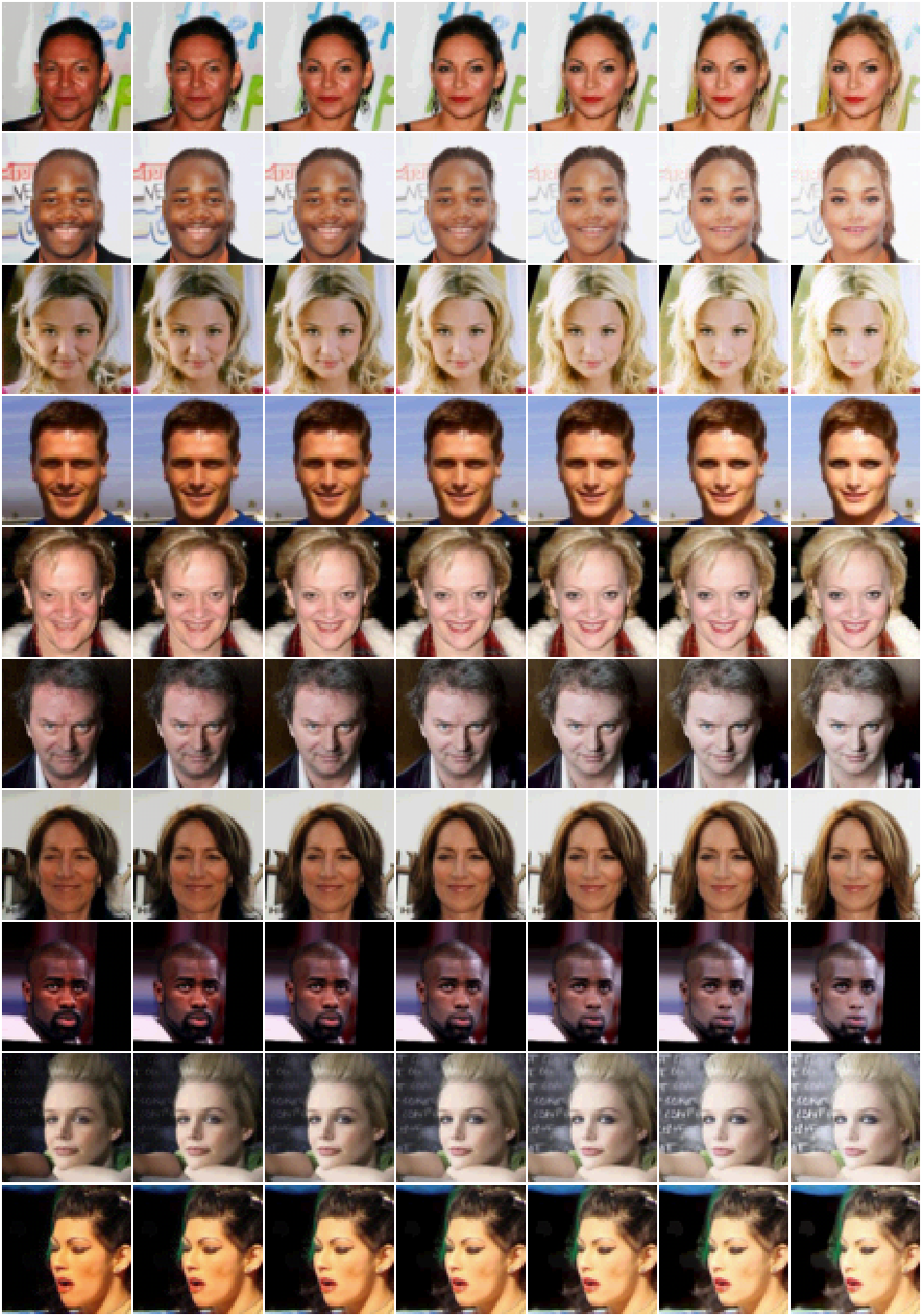}
	\caption{
		Similar individuals from $\simseti$ obtained by simultaneously varying the
		sensitive attributes \texttt{Pale\_Skin} \texttt{+} \texttt{Young} \texttt{+}
		\texttt{Blond}.
	}
	\label{fig:appendix-pale-young-blond-0}
\end{figure*}

\begin{figure*}[hbt!]
	\centering
	\includegraphics[width=\linewidth]{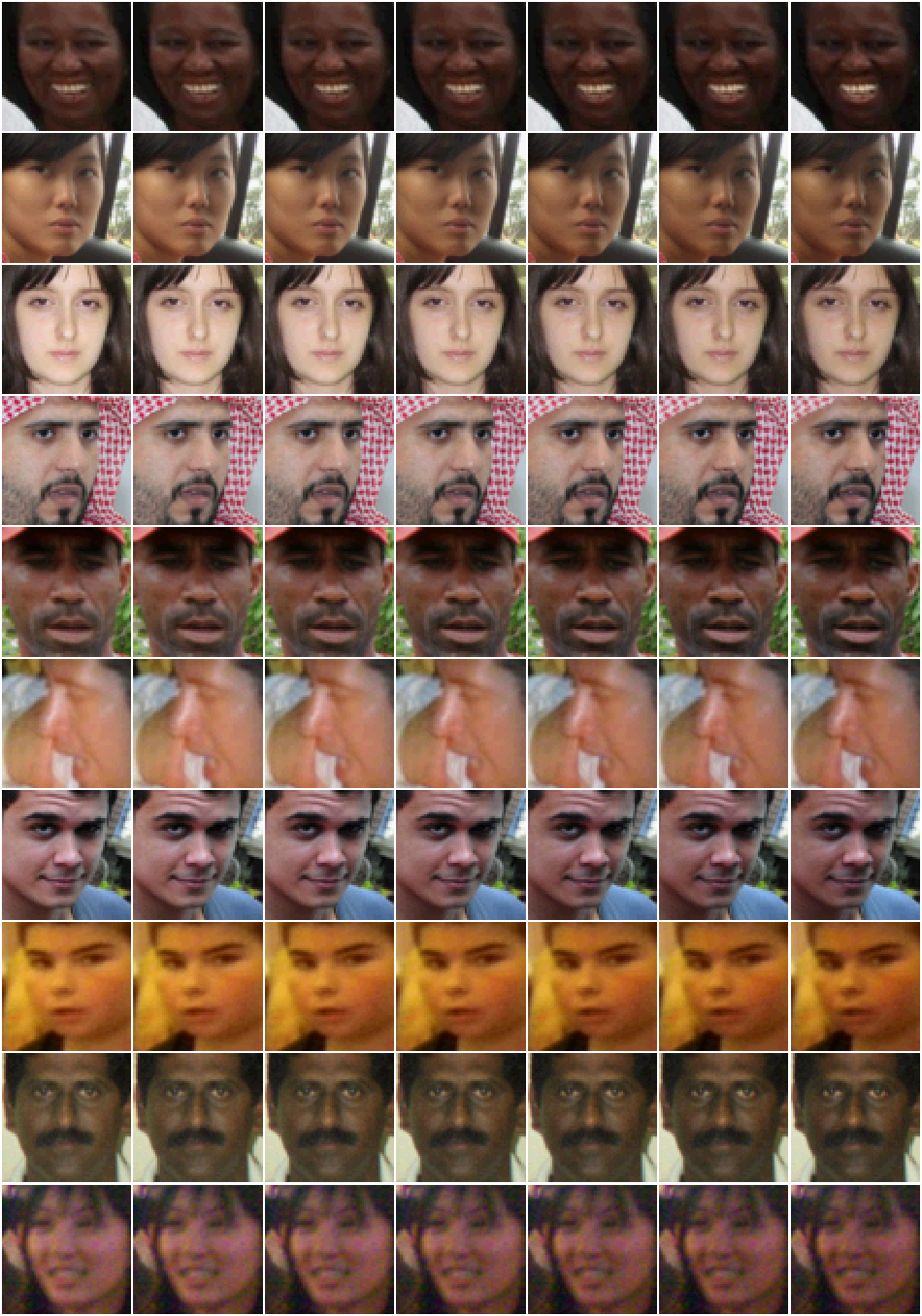}
	\caption{
		Similar individuals from $\simseti$, for $\inputpoint$ in FairFace
		and $\epsilon=0.5$, obtained by varying the sensitive attribute
		\texttt{Race=Black}.
	}
	\label{fig:appendix-fairface-0}
\end{figure*}

\begin{figure}[hbt!]
	\centering
	\subfloat{%
		\includegraphics[width=\linewidth]{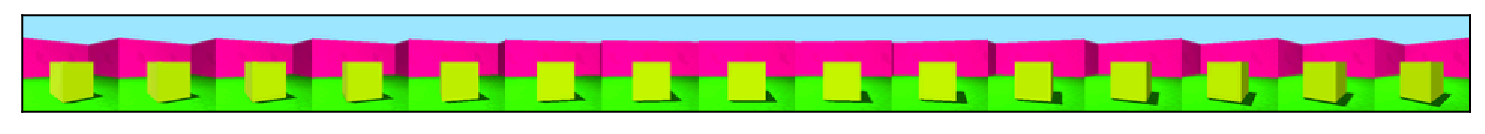}%
	}\hfil
	\vspace{-0.2cm}
	\subfloat{%
		\includegraphics[width=\linewidth]{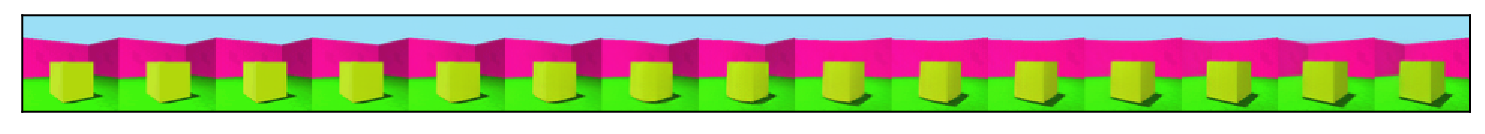}%
	}\hfil

	\subfloat{%
		\includegraphics[width=\linewidth]{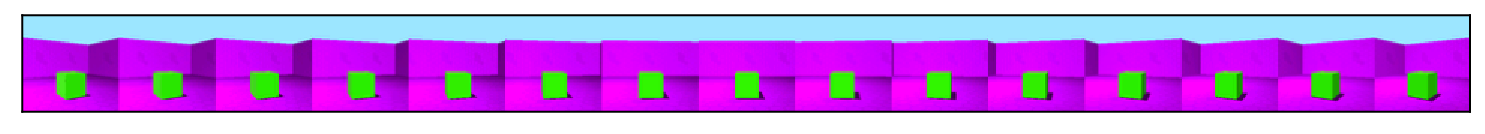}%
	}\hfil
	\vspace{-0.2cm}
	\subfloat{%
		\includegraphics[width=\linewidth]{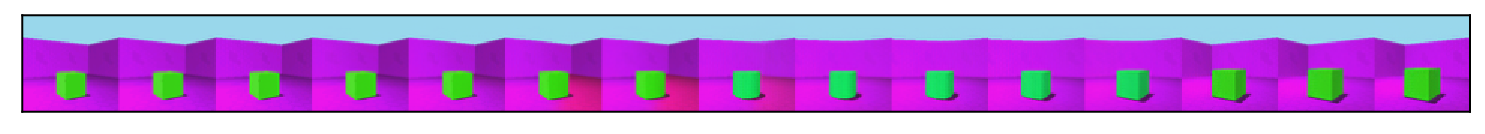}%
	}\hfil

	\subfloat{%
		\includegraphics[width=\linewidth]{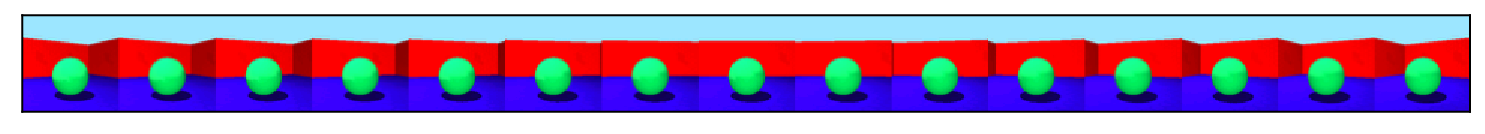}%
	}\hfil
	\vspace{-0.2cm}
	\subfloat{%
		\includegraphics[width=\linewidth]{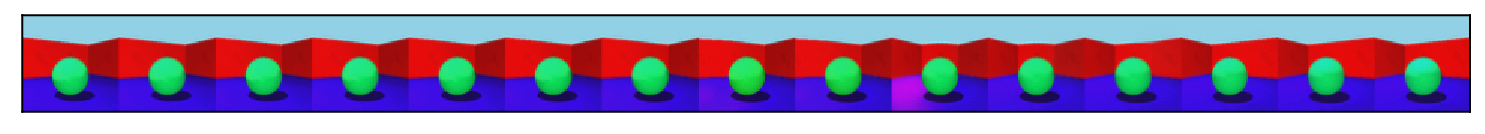}%
	}\hfil

	\subfloat{%
		\includegraphics[width=\linewidth]{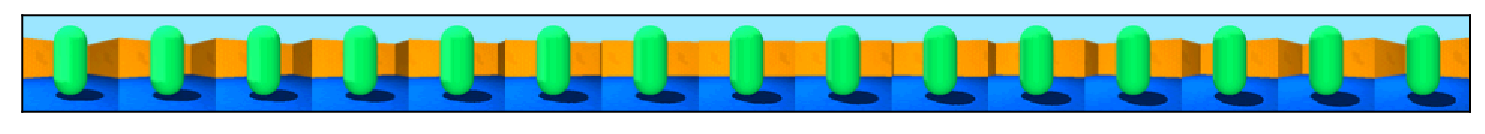}%
	}\hfil
	\vspace{-0.2cm}
	\subfloat{%
		\includegraphics[width=\linewidth]{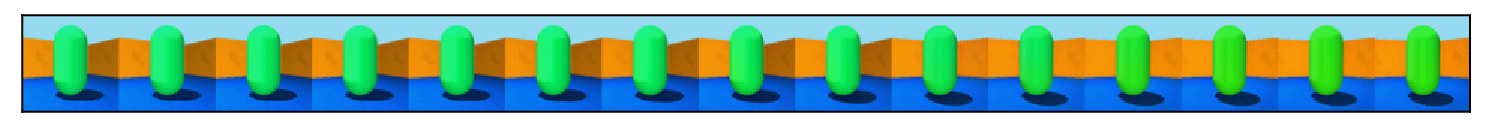}%
	}\hfil

	\subfloat{%
		\includegraphics[width=\linewidth]{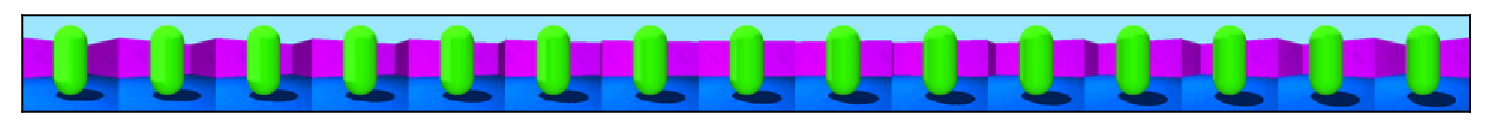}%
	}\hfil
	\vspace{-0.2cm}
	\subfloat{%
		\includegraphics[width=\linewidth]{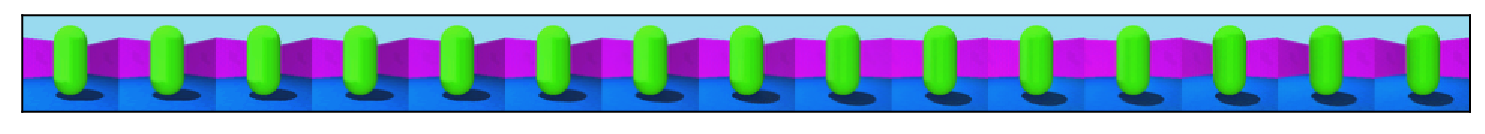}%
	}\hfil

	\subfloat{%
		\includegraphics[width=\linewidth]{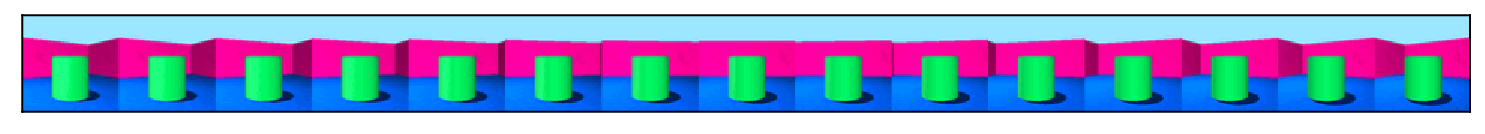}%
	}\hfil
	\vspace{-0.2cm}
	\subfloat{%
		\includegraphics[width=\linewidth]{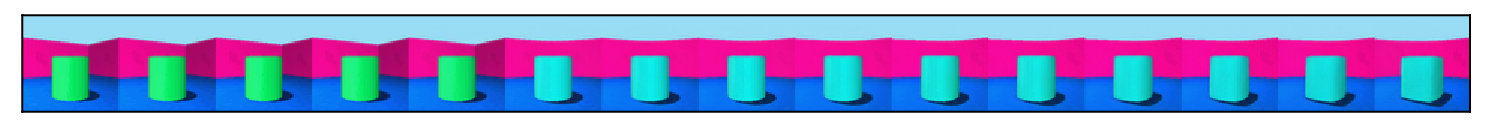}%
	}

	\caption{
		Sampled shapes at 15 different ground truth orientations. The original (above)
		and the corresponding reconstructions (below) obtained from interpolating along
		one of the attribute vectors, $\attrvec_{1,15}$
		(see~\cref{app:3dshapes} for details), grouped together.
	}
	\label{fig:shapes_appendix}
\end{figure}

}{}

\message{^^JLASTPAGE \thepage^^J}

\end{document}